\newcommand{\beqa}{\begin{eqnarray}}
\newcommand{\eeqa}{\end{eqnarray}}
\newcommand{\beqas}{\begin{eqnarray*}}
\newcommand{\eeqas}{\end{eqnarray*}}
\newcommand{\ba}{\begin{array}}
\newcommand{\ea}{\end{array}}
\newcommand{\bi}{\begin{itemize}}
\newcommand{\ei}{\end{itemize}}
\DeclareMathOperator{\diag}{diag}
\newcommand{\rank}{\mathtt{rank}}
\newcommand{\DDIM}{\mathtt{DDIM}}
\newcommand{\DDIMInv}{\mathtt{DDIM-Inv}}
\newcommand{\CFG}{\mathtt{CFG}}
\newcommand{\DFT}{\mathtt{DFT}}
\newcommand{\DFTInv}{\mathtt{DFT-Inv}}
\newcommand{\mb}{\mathbf}
\newtheorem{lemma}{Lemma}
\newtheorem{thm}{Theorem}
\newtheorem{assumption}{Assumption}
\newcounter{spb}
\def\b0{\bm{0}}
\def\b1{\bm{1}}
\def\ba{\bm{a}}
\title{Shallow Diffuse: Robust and Invisible Watermarking through Low-Dim Subspaces in Diffusion Models}
\newcommand{\jointfirst}{\textsuperscript{\dag}}
\newcommand{\corrauth}{\textsuperscript{\ddag}}
\affiliation{
  University of Michigan
}
\keywords{Diffusion Model, Watermark, Low-dimensional Subspace, Consistency, Robustness}
\date{\today}
\begin{document}

\makeDeepthinkHeader

\vspace{-0.1in}
\begin{figure}[h]
\centering\includegraphics[width=\linewidth]{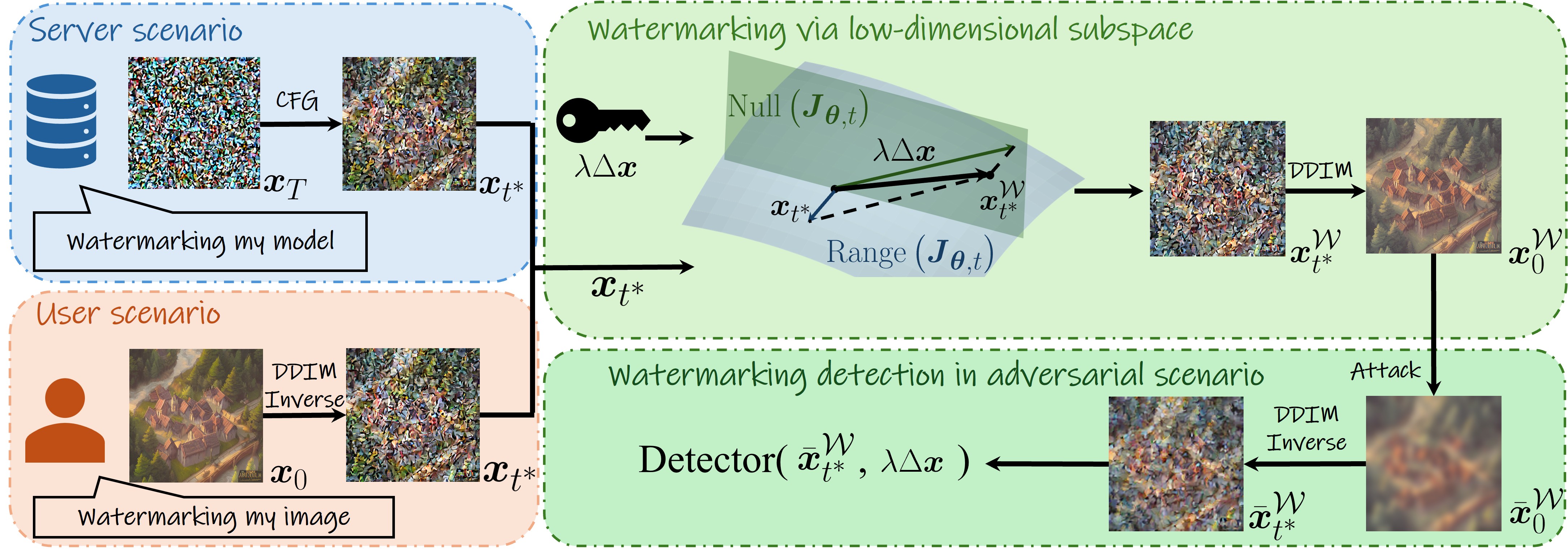}
    \caption{\textbf{Overview of Shallow Diffuse for T2I Diffusion Models.} The \textit{server scenario} (top left) illustrates watermark embedding during generation using CFG, while the \textit{user scenario} (bottom left) demonstrates post-generation watermark embedding via DDIM inversion. 
    In both scenarios, the watermark is applied within a \textit{low-dimensional subspace} (top right), where most of the watermark resides in the null space of  $\bm J_{\bm \theta, t}$ due to its low dimensionality. 
    }
    \label{fig:method_overview}
    \vspace{-0.1in}
\end{figure}

\newpage
\tableofcontents
\newpage

\newcommand{\qq}[1]{\textcolor{blue}{\bf [{\em Qing:} #1]}}
\newcommand{\pw}[1]{\textcolor{red}{\bf [{\em Peng:} #1]}}
\section{Introduction}

Diffusion models \cite{ddpm, song2020score} have recently become a new dominant family of generative models, powering various commercial applications such as Stable Diffusion \cite{stablediffusion, stablediffusionv3}, DALL-E \cite{dalle2, dalle3}, Imagen \cite{Imagen}, Stable Audio \cite{stableaudio} and Sora \cite{sora}. These models have significantly advanced the capabilities of text-to-image, text-to-audio, text-to-video, and multi-modal generative tasks. However, the widespread usage of AI-generated content from commercial diffusion models on the Internet has raised several serious concerns: (a) AI-generated misinformation presents serious risks to societal stability by spreading unauthorized or harmful narratives on a large scale \cite{zellers2019defending, goldstein2023generative,brundage2018malicious}; (b) the memorization of training data by those models \cite{gu2023memorization,somepalli2023diffusion,somepalli2023understanding,wen2023detecting,zhang2024wasserstein} challenges the originality of the generated content and raises potential copyright infringement issues; (c) iterative training on AI-generated content, known as model collapse \cite{fu2024towards, alemohammad2023self, dohmatob2024tale, shumailov2024ai, gibney2024ai} can degrade the quality and diversity of outputs over time, resulting in repetitive, biased, or low-quality generations that may reinforce misinformation and distortions in the wild Internet.

To deal with these challenges, watermarking is a crucial technique for identifying AI-generated content and mitigating its misuse. Typically, it can be applied in two main scenarios: (a) \textit{the server scenario}, where given an initial random seed, the watermark is embedded into the image during the generation process; and (b) \textit{the user scenario}, where given a generated image, the watermark is injected in a post-processing manner; (as shown in the left two blocks in \Cref{fig:method_overview} top). Traditional watermarking methods \cite{cox2007digital, solachidis2001circularly, chang2005svd, liu2019optimized} are mainly designed for the user scenario, embedding detectable watermarks directly into images with minimal modification. However, these methods are susceptible to attacks. For example, the watermarks can become undetectable with simple corruptions such as blurring on watermarked images. More recent methods considered the server scenario \cite{zhang2024robustimagewatermarkingusing, fernandez2023stable, wen2023tree, yang2024gaussian, ci2024ringid, huang2024robin}, enhancing robustness by integrating watermarking into the sampling process of diffusion models. For example, recent works \cite{ci2024ringid, wen2023tree} embed the watermark into the initial random seed in the Fourier domain and then sample an image from the watermarked seed. As illustrated in \Cref{fig:comparison}, these methods frequently result in inconsistent watermarked images because they substantially distort the original Gaussian noise distribution. Moreover, since they require access to the initial random seed, it limits their use in the user scenario. To the best of our knowledge, there is no robust and consistent watermarking method suitable for both the server and user scenarios (a more detailed discussion of related works is provided in \Cref{sec:related_works}).

\begin{figure*}[t]
\centering\includegraphics[width=.8\linewidth]{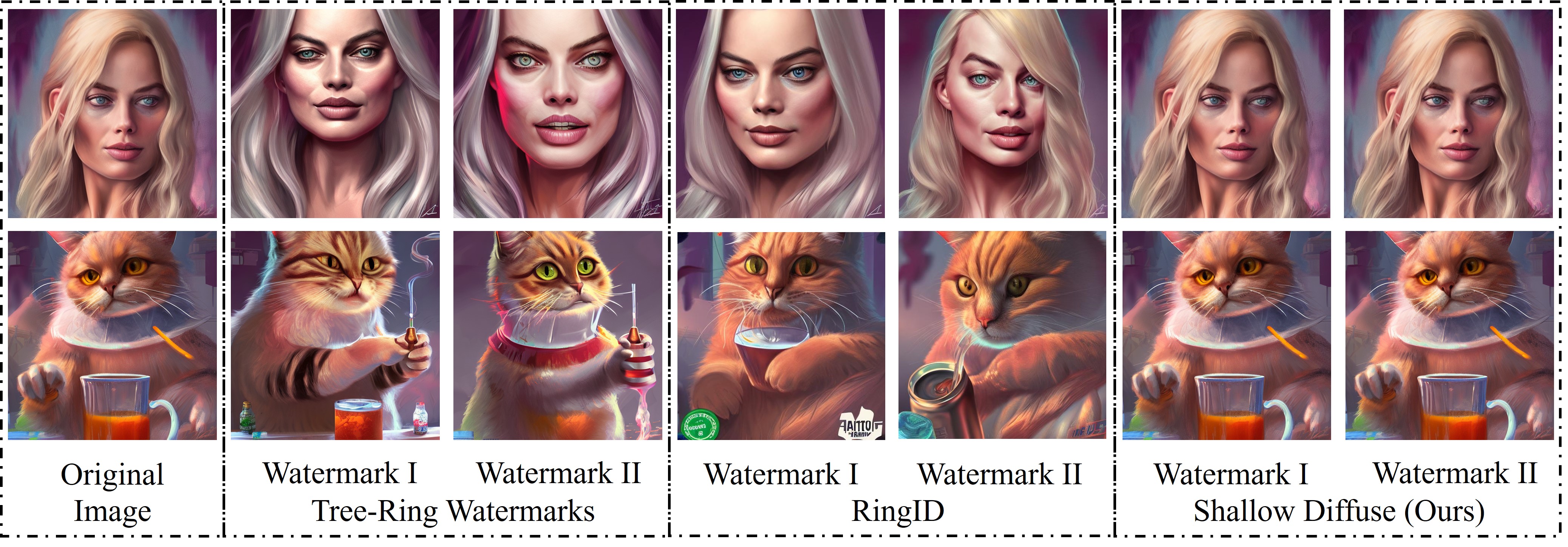}
\centering\includegraphics[width=.8\linewidth]{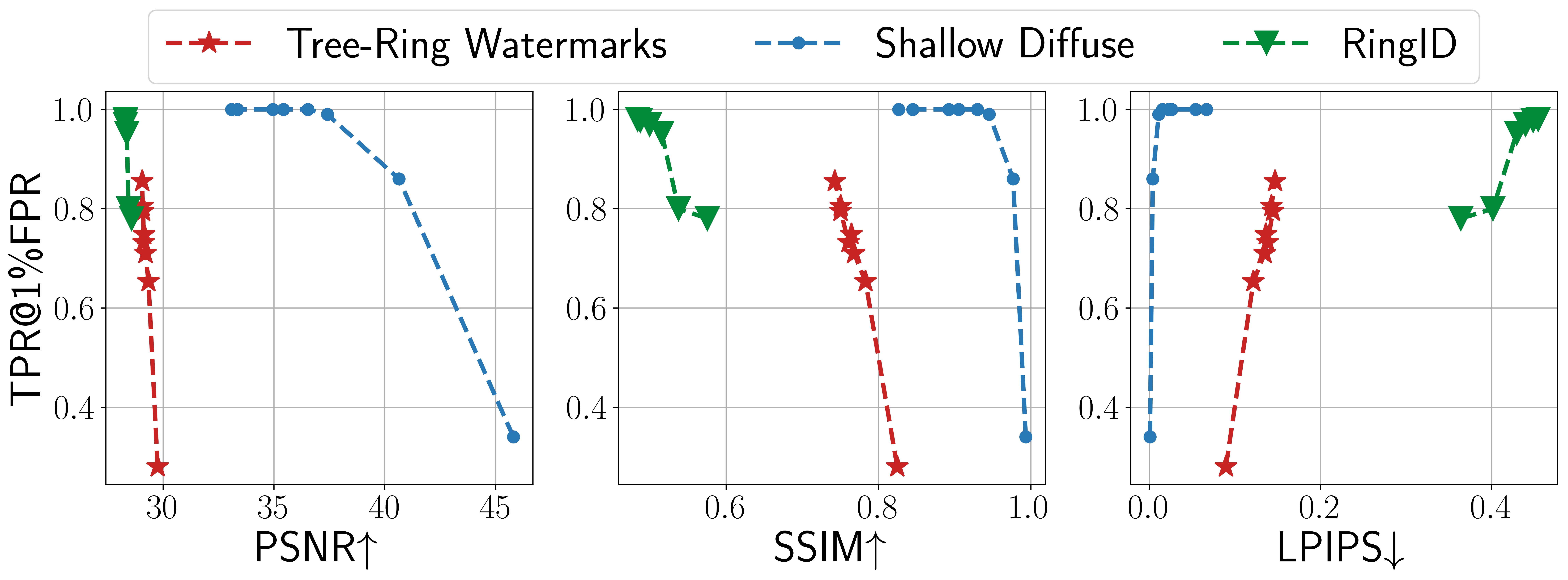}
    \caption{\textbf{Comparison between Tree-Ring Watermarks, RingID and Shallow Diffuse.} (Top) On the left are the original images, and on the right are the corresponding watermarked images generated using three techniques: Tree-Ring \cite{wen2023tree}, RingID \cite{ci2024ringid}, and Shallow Diffuse. For each technique, we sampled watermarks using two distinct random seeds and obtained the respective watermarked images. (Bottom) Trade-off between consistency (measured by PSNR, SSIM, LPIPS) and robustness (measured by TPR@1\%FPR) for Tree-Ring Watermarks, RingID, and Shallow Diffuse.}
    \label{fig:comparison}
\end{figure*}

To address these limitations, we proposed \textit{Shallow Diffuse}, a robust and consistent watermarking approach that can be employed for both the server and user scenarios. In contrast to prior works \cite{ci2024ringid, wen2023tree}, which embed watermarks into the initial random seed and tightly couple watermarking with the sampling process, Shallow Diffuse decouples these two steps by exploiting the low-dimensional subspace structure inherent in the generation process of diffusion models \cite{wang2024diffusion, chen2024exploring}. The key insight is that, due to the low dimensionality of the subspace, a significant portion of the watermark will lie in its null space, which effectively separates the watermarking from the sampling process (see \Cref{fig:method_overview} for an illustration). Our theoretical and empirical analyses demonstrate that this decoupling strategy significantly improves the consistency of the watermark. Moreover, Shallow Diffuse is flexible for both server and user scenarios, with better consistency as well as independence from the initial random seed.

\textbf{Our contributions.} In summary, our proposed Shallow Diffuse offers several key advantages over existing watermarking techniques \cite{cox2007digital, solachidis2001circularly, chang2005svd, liu2019optimized, zhang2024robustimagewatermarkingusing, fernandez2023stable, wen2023tree, yang2024gaussian, ci2024ringid,huang2024robin} that we highlight below:
\begin{itemize}[leftmargin=*]
    \item \textbf{Flexibility.} Watermarking via Shallow Diffuse works seamlessly under both server-side and user-side scenarios. In contrast,  most of the previous methods only focus on one scenario without an easy extension to the other; see  \Cref{tab:comparison_server_scenario} and \Cref{tab:comparison_user_scenario} for demonstrations.
    \item \textbf{Consistency and robustness.} By decoupling the watermarking from the sampling process, Shallow Diffuse achieves better consistency and comparable robustness. Extensive experiments (\Cref{tab:comparison_server_scenario} and \Cref{tab:comparison_user_scenario}) support our claims, with extra ablation studies in \Cref{fig:low-rank-robust} and \Cref{fig:low-rank-consistent}.
    \item \textbf{Provable guarantees.} The consistency and detectability of our approach are theoretically justified. Assuming a proper low-dimensional image data distribution (see \Cref{assump:1}), we rigorously establish bounds for consistency (\Cref{thm:consistency}) and detectability (\Cref{thm:detectability}). 
\end{itemize}



\section{Preliminaries}
\label{sec:preliminaries}
We start by reviewing the basics of diffusion models \cite{ddpm, song2020score, edm}, followed by several key empirical properties that will be used in our approach: the low-rankness and local linearity of the diffusion model \cite{wang2024diffusion, chen2024exploring}.


\subsection{Preliminaries on Diffusion Models}\label{subsec:basics}

\paragraph{Basics of diffusion models.} In general, diffusion models consist of two processes:
\begin{itemize}[leftmargin=*]
    \item \emph{The forward diffusion process.} The forward process progressively perturbs the original data $\bm{x}_0$ to a noisy sample $\bm{x}_{t}$ for some integer $t\in\left[0, T\right] $ with $T \in \mathbb{Z}$. As in \cite{ddpm}, this can be characterized by a conditional Gaussian distribution $p_{t}(\bm x_t|\bm x_0) = \mathcal{N}(\bm x_t;\sqrt{\alpha_t} \bm x_0, (1 - \alpha_t)\textbf{I}_d)$. Particularly, parameters $\{\alpha_t\}_{t=0}^T$ sastify: (\emph{i}) $\alpha_0 = 1$, and thus $p_0=p_{\rm data}$, and (\emph{ii}) $\alpha_T = 0$, and thus $p_{T} = \mathcal{N}(\bm 0,\textbf{I}_d)$.   
    \item \emph{The reverse sampling process.} To generate a new sample, previous works \cite{ddpm, ddim, dpm-solver, edm} have proposed various methods to approximate the reverse process of diffusion models. Typically, these methods involve estimating the noise $\bm \epsilon_t$ and removing the estimated noise from $\bm x_t$ recursively to obtain an estimate of $\bm x_0$. Specifically, One  sampling step of Denoising Diffusion Implicit Models (DDIM) \cite{ddim} from $\bm x_{t}$ to $\bm x_{t-1}$ can be described as:
\begin{equation}
    \label{eq:ddim}
    \displaystyle
    \begin{aligned}
    \bm x_{t-1}  &= \sqrt{\alpha_{t - 1}} \underbrace{\left(\frac{\bm x_t - \sqrt{1 - \alpha_{t}} \bm \epsilon_{\bm \theta}(\bm x_t, t)}{\sqrt{\alpha_{t}}}\right)}_{\coloneqq \bm f_{\bm \theta, t}(\bm x_t)} + \sqrt{1 - \alpha_{t - 1}} \bm \epsilon_{\bm \theta}(\bm x_t, t),
    \end{aligned}
\end{equation}
    where $\bm \epsilon_{\bm \theta}(\bm x_t, t)$ is parameterized by a neural network and trained to predict the noise $\bm \epsilon_t$ at time $t$. From previous works \cite{reproducibility, luo2022understanding}, the first term in \Cref{eq:ddim}, defined as $\bm f_{\bm \theta, t}(\bm x_t)$, is the \emph{posterior mean predictor} (PMP) that predict the posterior mean $\mathbb E[\bm x_0|\bm x_t]$. DDIM could also be applied to a clean sample $\bm x_0$ and generate the corresponding noisy $\bm{x}_{t}$ at time $t$, named DDIM Inversion. One sampling step of DDIM inversion is similar to \Cref{eq:ddim}, by mapping from $\bm x_{t-1}$ to $\bm x_{t}$. For any $t_1$ and $t_2$ with $t_2>t_1$, we denote multi-time steps DDIM operator and its inversion as $\bm x_{t_1} = \DDIM(\bm x_{t_2}, t_1) \text{ and } \bm x_{t_2} = \DDIMInv (\bm x_{t_1}, t_2).$ 
\end{itemize}

\paragraph{Text-to-image (T2I) diffusion models \& classifier-free guidance (CFG).} The diffusion model can be generalized from unconditional to T2I \cite{stablediffusion,stablediffusionv3}, where the latter enables controllable image generation $\bm x_0$ guided by a text prompt $\bm c$. In more detail, when training T2I diffusion models, we optimize a conditional denoising function $\bm{\epsilon}_{\bm{\theta}}(\bm x_t, t, \bm c)$. For sampling, we employ a technique called \textit{classifier-free guidance} (CFG) \cite{cfg}, which substitutes the unconditional denoiser $\bm{\epsilon}_{\bm{\theta}}(\bm x_t, t)$ in \Cref{eq:ddim} with its conditional counterpart $\bm{\Tilde{\epsilon}}_{\bm{\theta}}(\bm x_t, t, \bm c)$ that can be described as $\bm{\Tilde{\epsilon}}_{\bm{\theta}}(\bm x_t, t, \bm c) = (1 - \eta) \bm{\epsilon}_{\bm{\theta}}(\bm x_t, t, \bm \varnothing) + \eta \bm{\epsilon}_{\bm{\theta}}(\bm x_t, t, \bm c) .$. Here, $\bm \varnothing$ denotes the empty prompt, and $\eta>0$ denotes the strength for the classifier-free guidance. For simplification, for any $t_1$ and $t_2$ with $t_2>t_1$, we denote multi-time steps CFG operator as $\bm x_{t_1} = \CFG(\bm x_{t_2}, t_1, \bm c).$ DDIM and DDIM inversion could also be generalized to T2I version, denoted by $\bm x_{t_1} = \DDIM(\bm x_{t_2}, t_1, \bm c) \text{ and } \bm x_{t_2} = \DDIMInv (\bm x_{t_1}, t_2, \bm c).$

\subsection{Local Linearity and Intrinsic Low-Dimensionality in PMP}\label{subsec:key-properties}


In this work, we leverage two key properties of the PMP $f_{\bm \theta, t}(\bm x_t)$ introduced in \Cref{eq:ddim} for watermarking diffusion models. Parts of these properties have been previously identified in recent papers \cite{wang2024diffusion,manor2024zeroshot,manor2024posterior}, and have been extensively analyzed in \cite{chen2024exploring}.
At a given timestep $t \in [0,T]$, consider the first-order Taylor expansion of the PMP $\bm f_{\bm \theta, t}(\bm x_t+ \lambda \Delta \bm x) $ at the point $\bm x_t$:
\begin{align}\label{eqn:linear-approx}
   \boxed{ \quad \bm l_{\bm \theta}(\bm x_t; \lambda \Delta \bm x) \; := \;   \bm f_{\bm \theta, t}(\bm x_t) + \lambda\bm J_{\bm \theta, t}(\bm x_t) \cdot \Delta \bm x,\quad }
\end{align}
where $\Delta \bm x \in \mathbb S^{d-1} $ is a perturbation direction with unit length, $\lambda \in \mathbb R$ is the perturbation strength, and $\bm J_{\bm \theta, t}(\bm x_t) = \nabla_{\bm x_t} \bm f_{\bm \theta, t}(\bm x_t)$ denotes the Jacobian of $\bm f_{\bm \theta, t}(\bm x_t)$. Within a certain range of noise levels, the learned PMP $\bm f_{\bm \theta, t}$ exhibits local linearity, and its Jacobian $\bm J_{\bm \theta, t} \in \mathbb R^{d \times d} $ is low rank:
\begin{itemize}[leftmargin=*]
    \item \textbf{Low-rankness of the Jacobian $\bm J_{\bm \theta, t}(\bm x_t)$.} 
    As shown in  Figure 2(a) of \cite{chen2024exploring}, the \emph{rank ratio} for $t\in[0,T]$ \emph{consistently} displays a U-shaped pattern across various network architectures and datasets: (\emph{i}) it is close to $1$ near either the pure noise $t=T$ or the clean image $t=0$, (\emph{ii}) $\bm J_{\bm \theta, t}(\bm x_t)$ is low-rank (i.e., the numerical rank ratio is below $10^{-2}$) for all diffusion models within the range $t \in \left[0.2T, 0.7T\right]$.
    \item \textbf{Local linearity of the PMP $\bm f_{\bm \theta, t}(\bm x_t)$.} As shown in \cite{chen2024exploring, li2024understanding}, the mapping $\bm f_{\bm \theta, t}(\bm x_t)$ exhibits strong linearity across a large portion of the timesteps, i.e., $\bm f_{\bm \theta, t}(\bm x_t+ \lambda \Delta \bm x) \approx \bm l_{\bm \theta}(\bm x_t; \lambda \Delta \bm x)$, a property that holds consistently true across different architectures trained on different datasets. 
\end{itemize}

\section{Watermarking by Shallow-Diffuse}\label{sec:method}


\begin{wrapfigure}{r}{0.5\textwidth}
\begin{minipage}{0.48\textwidth}
\begin{algorithm}[H]
\caption{Unconditional Shallow Diffuse}
\label{alg:uncond_algorithm}
\begin{algorithmic}[1]
\State \small{\textbf{Inject watermark:}}
\State \small{\textbf{Input}: original image $\bm x_0$ for the user scenario (initial random seed $\bm x_T$ for the server scenario), watermark $\lambda \Delta \bm x$, embedding timestep $t^*$,
\State \small{\textbf{Output}: watermarked image $\bm x^{\mathcal{W}}_0$,}}
\If{ user scenario}
\State $\bm x_{t^*} = \DDIMInv\left(\bm x_0, t^*\right)$
\Else{ server scenario}
\State $\bm x_{t^*} = \DDIM\left(\bm x_T, t^*\right)$
\EndIf
\State $\bm x^{\mathcal{W}}_{t^*} \leftarrow \bm x_{t^*} + \lambda \Delta \bm x$, $\bm x^{\mathcal{W}}_0 \leftarrow \DDIM\left(\bm x^{\mathcal{W}}_{t^*}, 0\right)$ 
\State \Comment{Embed watermark}
\State \textbf{Return:} $\bm x^{\mathcal{W}}_0$ 

\State 
\State \textbf{Detect watermark:}
\State \textbf{Input}: Attacked image $\bar{\bm x}_0^{\mathcal{W}}$, watermark $\lambda \Delta \bm x$, embedding timestep $t^*$,
\State \textbf{Output}: Distance score $\eta$,
\State $\bar{\bm x}_{t^*}^{\mathcal{W}} \leftarrow \DDIMInv\left(\bar{\bm x}_0^{\mathcal{W}}, t^*\right)$
\State $\eta = \texttt{Detector}\left(\bar{\bm x}_{t^*}^{\mathcal{W}}, \lambda \Delta \bm x\right)$
\State \textbf{Return:} $\eta$
\end{algorithmic}
\end{algorithm}
\end{minipage}
\end{wrapfigure}

This section introduces Shallow Diffuse, a training-free watermarking method designed for diffusion models. Building on the benign properties of PMP discussed in \Cref{subsec:key-properties}, we describe how to inject and detect invisible watermarks in unconditional diffusion models in \Cref{subsec:inject} and \Cref{subsec:detection}, respectively. \Cref{alg:uncond_algorithm} outlines the overall watermarking method for unconditional diffusion models. In \Cref{subsec:extension}, we generalize our approach to T2I diffusion models as shown in \Cref{fig:method_overview}.

\subsection{Injecting Invisible Watermarks} \label{subsec:inject}

Consider an unconditional diffusion model $\bm \epsilon_{\bm \theta}(\bm x_t, t)$ as introduced in \Cref{subsec:basics}.
Instead of injecting the watermark $\Delta \bm x$ in the initial noise, we inject it in a particular timestep $t^* \in [0,T]$ with 
\begin{align}\label{eqn:watermarking}
    \bm x^{\mathcal{W}}_{t^*} = \bm x_{t^*} + \lambda \Delta \bm x,
\end{align}
where $\lambda \in \mathbb R$ is the watermarking strength, $\bm x_{t^*} = \DDIMInv\left(\bm x_0, t^*\right)$ under the user scenario and $\bm x_{t^*} = \DDIM\left(\bm x_T, t^*\right)$ under the server scenario. Based upon \Cref{subsec:key-properties}, we choose the timestep $t^*$ so that the Jacobian of the PMP $\bm J_{\bm \theta, t}(\bm x_{t^*}) = \nabla_{\bm x_t} \bm f_{\bm \theta, t}(\bm x_{t^*})$ is \emph{low-rank}. Moreover, based upon the linearity of PMP discussed in \Cref{subsec:key-properties}, we approximately have
\begin{equation}
\begin{aligned}\label{eqn:invisible}
   \bm f_{\bm \theta, t}(\bm x^{\mathcal{W}}_{t^*}) &\;=\;  \bm f_{\bm \theta, t}(\bm x_{t^*}) + \lambda \underset{\approx \mb 0}{ \bm J_{\bm \theta, t}(\bm x_{t^*}) \Delta \bm x} \\ 
   &\;\approx\; \bm f_{\bm \theta, t}(\bm x_{t^*})  ,
\end{aligned}
\end{equation}
where the watermark $\Delta \bm{x}$ is designed to span the entire space $\mathbb{R}^d$ \emph{uniformly}; a more detailed discussion on the pattern design of $\Delta \bm{x}$ is provided in \Cref{subsec:detection}. The key intuition for \Cref{eqn:invisible} to hold is that, when $r_{t^*} = \mathrm{rank}(\bm J_{\bm \theta, t}(\bm x_{t^*}))$ is low, a significant proportion of $\lambda \Delta \bm x$ lies in the \emph{null space} of $\bm J_{\bm \theta, t}(\bm x_{t^*})$, so that $ \bm J_{\bm \theta, t}(\bm x_{t^*}) \Delta \bm x \approx \bm 0$.

Therefore, the selection of $t^*$ is based on the requirement that $\bm f_{\bm \theta, t}(\bm x_t^*)$ is locally linear and that the rank of its Jacobian satisfies $r_t^* \ll d$. In practice, we choose $t^* = 0.3 T$ based on results from the ablation study in \Cref{sec:ablation_timestep}. As a result, the injection in \Cref{eqn:invisible} preserves better consistency without changing the predicted $\bm x_0$. In the meanwhile, it remains highly robust because any attack on $\bm x_0$ would remain disentangled from the watermark, so that $\lambda \Delta \bm x$ remains detectable.

In practice we employ the DDIM method instead of PMP for sampling high-quality images, but the above intuition still carries over to DDIM. From \Cref{eq:ddim}, 
when we inject the watermark $\Delta \bm x$ into $\bm x_t^*$ as given in \Cref{eqn:watermarking}, we know that 
\begin{align}
     \bm x^{\mathcal{W}}_{t^* - 1} &= \DDIM(\bm x^{\mathcal{W}}_{t^*}, t^*-1) \nonumber \\
     &\approx  \sqrt{\alpha_{t^*-1}}\bm f_{\bm \theta, t}(\bm x_{t^*}) + \frac{\sqrt{1 - \alpha_{t^* - 1}}}{\sqrt{1 - \alpha_{t^*}}}\left(\bm x_{t^*} +  {\color{blue} \lambda \Delta \bm x }- \sqrt{\alpha_{t^*}} \bm f_{\bm \theta, t}(\bm x_{t^*})\right), \label{eqn:DDIM-watermark}
\end{align}
where the approximation follows from \Cref{eqn:invisible}. This implies that the watermark $\lambda \Delta \bm x$ is embedded into the DDIM sampling process entirely through the second term of \Cref{eqn:DDIM-watermark} and it decouples from the first term, which predicts $\bm x_0$. 
Therefore, similar to our analysis for PMP, the first term in \Cref{eqn:DDIM-watermark} maintains the consistency of data generation, whereas the difference in the second term, highlighted in blue, serves as a key feature for watermark detection, which we will discuss next. In \Cref{sec:analysis}, we provide rigorous proofs validating the consistency and detectability of our approach.

\subsection{Watermark Design and Detection}\label{subsec:detection}

Second, building on the watermark injection method described in \Cref{subsec:inject}, we discuss the design of the watermark pattern and the techniques for effective detection.

\paragraph{Watermark pattern design.}
Building on the method proposed by \cite{wen2023tree}, we inject the watermark in the frequency domain to enhance robustness against adversarial attacks. Specifically, we adapt this approach by defining a watermark $\lambda \Delta \bm x$ for the input $\bm x_{t^*}$ at timestep $t^*$ as follows:
\begin{equation}\label{eq:watermark_pattern}
\begin{aligned}
       &\lambda \Delta \bm x \;:=\; \DFTInv\left(\DFT\left(\bm x_{t^*}\right) \odot \left(1 - \bm M\right) + \bm W \odot \bm M \right) -  \bm x_{t^*}, 
\end{aligned}
\end{equation}

where the Hadamard product $\odot$ denotes the element-wise multiplication. Additionally, we have the following for \Cref{eq:watermark_pattern}:

\begin{itemize}[leftmargin=*]
    \item \textbf{Transformation into the frequency domain.} Let $\DFT(\cdot)$ and $\DFTInv(\cdot)$ denote the forward and inverse Discrete Fourier Transform (DFT) operators, respectively. As shown in \Cref{eq:watermark_pattern}, we first apply $\DFT(\cdot)$ to transform $\bm{x}_{t^*}$ into the frequency domain, where the watermark is introduced via a mask. Finally, the modified input is transformed back into the pixel domain using $\DFTInv(\cdot)$.
    \item \textbf{The mask and key of watermarks.} $\bm M$ is the mask used to apply the watermark in the frequency domain, as shown in the top-left of \Cref{fig:watermark_pattern}, and $\bm W$ denotes the key of the watermark. Typically, the mask $\mb M$ is circular, with the white area representing $1$ and the black area representing $0$ in \Cref{fig:watermark_pattern}. The mask is used  to modify specific frequency bands of the image. Specifically, circular mask $M$ has a radius of 8. In the following, we discuss the design of $\bm M$ and $\bm W$ in detail.
\end{itemize}

\begin{figure}
\centering
\includegraphics[width=.75\linewidth]{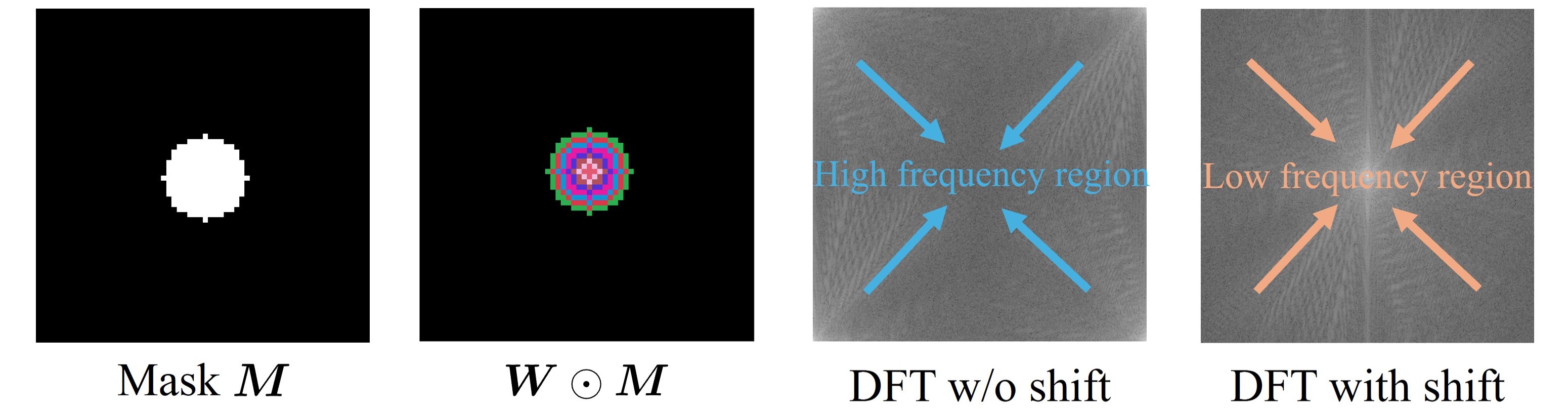}

 \caption{\textbf{Visualization of Watermark Patterns}. The left two images show the circular mask $\bm M$ and the key within the mask $\bm M \odot \bm W$, where the key $\bm W$ consists of multiple rings and each sampled from the Gaussian distribution. The right two images illustrate the low- and high-frequency regions applying DFT, both before and after centering the zero frequency.}
\label{fig:watermark_pattern}
\end{figure}

In contrast to prior methods \cite{wen2023tree, ci2024ringid}, which design the mask $\bm{M}$ to modify the \textbf{low-frequency components} of the initial noise input, we construct $\bm{M}$ to target the \textbf{high-frequency components} of the image. While modifying low-frequency components is effective due to the concentration of image energy in those bands, such approaches often introduce significant visual distortion when watermarks are embedded (see \Cref{fig:comparison} for illustration). In contrast, as shown in \Cref{fig:watermark_pattern}, our method introduces minimal distortion by operating on the high-frequency components, which correspond to finer details and inherently contain less energy. This effect is further amplified in our case, as we apply the perturbation to $\bm x_{t^*}$, which is closer to the clean image $\bm x_0$, rather than to the initial noise used in \cite{wen2023tree, ci2024ringid}. To isolate the high-frequency components, we apply the DFT without shifting and centering the zero-frequency component, as illustrated in the bottom-left of \Cref{fig:watermark_pattern}.

In designing the key $\bm W$, we follow \cite{wen2023tree}. The key $\bm W$ is composed of multi-rings and each ring has the same value drawn from Gaussian distribution; see the top-right of \Cref{fig:watermark_pattern} for an illustration. Further ablation studies on the choice of $\bm M$, $\bm W$, and the effects of selecting low-frequency versus high-frequency regions for watermarking can be found in \Cref{tab:ablation_watermark_pattern}.

\paragraph{Watermark detection.} During watermark detection, suppose we are given a watermarked image $\bar{\bm x}^{\mathcal{W}}_0$ with certain corruptions, we apply DDIM Inversion to recover the watermarked image at timestep $t^*$, denoted as $\bar{\bm x}^{\mathcal{W}}_{t^*} = \DDIMInv\left(\bar{\bm x}^{\mathcal{W}}_0, t\right)$. To detect the watermark, following \cite{wen2023tree, zhang2024robustimagewatermarkingusing}, the $\texttt{Detector}(\cdot)$ in \Cref{alg:uncond_algorithm} computes the following p-value:
\begin{equation}\label{eqn:p-value}
    \eta = \frac{ \texttt{sum}(\bm M) \cdot ||\bm M\odot \bm W - \bm M \odot \DFT\left(\bar{\bm x}^{\mathcal{W}}_{t^*}\right)||_F^2}{||\bm M \odot \DFT\left(\bar{\bm x}^{\mathcal{W}}_{t^*}\right)||_F^2},
\end{equation}
where $\texttt{sum}(\cdot)$ is the summation of all elements of the matrix. Ideally, if $\bar{\bm x}^{\mathcal{W}}_{t^*}$ is a watermarked image, $\bm M\odot \bm W = \bm M \odot \DFT\left(\bar{\bm x}^{\mathcal{W}}_{t^*}\right)$ and $\eta = 0$. When $\bar{\bm x}^{\mathcal{W}}_{t^*}$ is a non-watermarked image, $\bm M\odot \bm W \neq \bm M \odot \DFT\left(\bar{\bm x}^{\mathcal{W}}_{t^*}\right)$ and $\eta > 0$. By selecting a threshold $\eta_0$, non-watermarked images satisfy $\eta > \eta_0$, while watermarked images satisfy $\eta < \eta_0$. The theoretical derivation of the p-value $\eta$ could be found in \cite{zhang2024robustimagewatermarkingusing}.

\subsection{Extension to Text-to-Image (T2I) Diffusion Models}\label{subsec:extension}

So far, our discussion has focused exclusively on unconditional diffusion models. Next, we show how our approach can be readily extended to T2I diffusion models, which are widely used in practice. Specifically, \Cref{fig:method_overview} provides an overview of our method for T2I diffusion models, which can be flexibly applied to both server and user scenarios:
\begin{itemize}[leftmargin=*]
    \item \textbf{Watermark injection.} Shallow Diffuse embeds watermarks into the noise corrupted image $\bm x_{t^*}$ at a specific timestep $t^* = 0.3T$. In the \textbf{server scenario}, given $\bm x_T \sim \mathcal{N}(\bm 0, \bm I_d)$ and prompt $\bm c$, we calculate $\bm x_{t^*} = \CFG\left(\bm x_T, t^*, \bm c \right)$. In the \textbf{user scenario}, given the generated image $\bm x_0$, we compute $\bm x_{t^*} = \DDIMInv\left(\bm x_0, t^*, \bm \varnothing \right)$, using an empty prompt $\bm \varnothing$.   Next, similar to \Cref{subsec:inject}, we apply DDIM to obtain the watermarked image $\bm x^{\mathcal{W}}_0 = \DDIM\left(\bm x^{\mathcal{W}}_{t^*}, 0, \bm \varnothing\right)$.
    \item \textbf{Watermark detection.} During watermark detection, suppose we are given a watermarked image $\bar{\bm x}^{\mathcal{W}}_0$ with certain corruptions, we apply the DDIM Inversion to recover the watermarked image at timestep $t^*$, denoted as $\bar{\bm x}^{\mathcal{W}}_{t^*} = \DDIMInv\left(\bar{\bm x}^{\mathcal{W}}_0, t^*, \bm \varnothing \right)$. We detect the watermark $\Delta \bm x$ in $\bar{\bm x}^{\mathcal{W}}_{t^*}$ by calculating $\eta$ in \Cref{eqn:p-value}, with detail explained in \Cref{subsec:detection}.
\end{itemize}
\section{Theoretical Justification}
\label{sec:analysis}


In this section, we provide theoretical justifications for the consistency and the detectability of Shallow Diffuse for unconditional diffusion models. We begin by making the following assumptions on the watermark and the diffusion process.

\begin{assumption}\label{assump:1}
Suppose the following holds for the PMP $\bm{f}_{\bm{\theta}, t}(\bm{x}_t)$ introduced in \Cref{eq:ddim}:
\begin{itemize}[leftmargin=*]
    \item \textbf{Linearity:} For any $t$ and $\Delta \bm x \in \mathbb S^{d-1}$, we always have $\bm f_{\bm \theta, t}(\bm x_t + \lambda \Delta \bm x) = \bm f_{\bm \theta, t}(\bm x_t) + \lambda  \bm J_{\bm \theta, t}(\bm x_t) \Delta \bm x.$
    \item \textbf{$L$-Lipschitz continuous:} we assume that $\bm f_{\bm \theta, t}(\bm x)$ is $L$-Lipschiz continuous $|| \bm J_{\bm \theta, t}(\bm x)||_2 \leq L, \forall \bm x \in \mathbb R^d, t \in [0, T]$
\end{itemize}
\end{assumption}
It should be noted that these assumptions are mild. The $L$-Lipschitz continuity is a common assumption for diffusion model analysis \cite{block2020generative,lee2022convergence,chen2023sampling,chen2023improved,zhu2023sample,chen2023score}. The approximated linearity have been shown in \cite{chen2024exploring} with the assumption of data distribution to follow a mixture of low-rank Gaussians. For the ease of analysis, we assume exact linearity, but it can be generalized to the approximate linear case with extra perturbation analysis.

Now consider injecting a watermark $\lambda \Delta \bm x$ in \Cref{eqn:watermarking}, where $\lambda>0$ is a scaling factor and $\Delta \bm{x}$ is a \emph{random} vector uniformly distributed on the unit hypersphere $\mathbb{S}^{d-1}$, i.e., $\Delta \bm x \sim \mathrm{U}(\mathbb{S}^{d-1})$. Then the following hold for $\bm{f}_{\bm{\theta}, t}(\bm{x}_t)$.

\begin{thm}[Consistency of the watermarks]\label{thm:consistency}
Suppose \Cref{assump:1} holds and $\Delta \bm x \sim \mathrm{U}(\mathbb{S}^{d-1})$.
     Define $\hat{\bm x}_{0, t}^{\mathcal{W}} \coloneqq \bm f_{\bm \theta, t}(\bm x_t + \lambda \Delta \bm x)$, $\hat{\bm x}_{0, t} \coloneqq \bm f_{\bm \theta, t}(\bm x_t)$. Then the $\ell_2$-norm distance between $\hat{\bm x}_{0, t}^{\mathcal{W}}$ and $\hat{\bm x}_{0, t}$ is bounded by:
\begin{equation}\label{eqn:consistency}
    || \hat{\bm x}_{0, t}^{\mathcal{W}} - \hat{\bm x}_{0, t}||_2 \leq \lambda L h(r_t),
\end{equation}
with probability at least $1 - r_t^{-1}$. Here, $h(r_t) = \sqrt{\frac{r_t}{d} + \sqrt{\frac{18 \pi^3}{d -2} \textup{log} \left(2 r_t\right)}}$.
\end{thm}
\Cref{thm:consistency} guarantees that injecting the watermark $\lambda\Delta \bm x $ would only change the estimation by an amount of $\lambda L h(r_t) $ with a constant probability, where \textbf{$ h(r_t) $ \emph{only} depends on the rank of the Jacobian $r_t$ ($r_t \ll d$) rather than the ambient dimension $d$}. Since $r_t$ is small, \Cref{eqn:consistency} implies that the change in the prediction would be small. Given the relationship between PMP and DDIM in \eqref{eq:ddim}, the consistency also applies to practical use. Moreover, in the following, we show that the injected watermark remains detectable based on the second term in \Cref{eqn:DDIM-watermark}.

\begin{thm}[Detectability of the watermark]\label{thm:detectability}
Suppose \Cref{assump:1} holds and $\Delta \bm x \sim \mathrm{U}(\mathbb{S}^{d-1})$. With $\bm x^{\mathcal{W}}_t$ given in \Cref{eqn:watermarking}, define $\bm{x}^{\mathcal{W}}_{t-1} = \DDIM\left(\bm{x}^{\mathcal{W}}_t, t-1\right)$ and $\bar{\bm{x}}^{\mathcal{W}}_{t} = \DDIMInv\left(\bm{x}^{\mathcal{W}}_{t-1}, t\right)$. The $\ell_2$-norm distance between $\Tilde{\bm x}^{\mathcal{W}}_{t}$ and $\bm x^{\mathcal{W}}_{t}$ can be bounded by: 
\begin{equation}
    \begin{aligned}
        ||\bar{\bm x}^{\mathcal{W}}_{t} - \bm x^{\mathcal{W}}_t||_2
    \leq \lambda L h(\max\{r_{t-1}, r_t\})  [- g\left({\alpha_{t}, \alpha_{t-1}}\right) + 
    g\left({\alpha_{t-1}, \alpha_{t}}\right)\left(1 -L g\left({\alpha_{t}, \alpha_{t-1}}\right)\right) ] 
    \end{aligned}
\end{equation}
with probability at least $1 - r^{-1}_{t} - r^{-1}_{t-1}$.
Here, $g(x, y)\coloneqq \frac{\sqrt{1 - y}\sqrt{x} - \sqrt{1 - x} \sqrt{y}}{\sqrt{1 - x}}$, $\forall x, y \in (0, 1)$. $h(r_t) = \sqrt{\frac{r_t}{d} + \sqrt{\frac{18 \pi^3}{d -2} \textup{log} \left(2 r_t\right)}}$.
\end{thm}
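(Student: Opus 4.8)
\textbf{Proof proposal for Theorem~\ref{thm:detectability}.}

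The plan is to track how the watermark perturbation $\lambda\Delta\bm x$ propagates through one DDIM step and then one DDIM-inversion step, reducing everything to the single-step estimate already available from the linearity assumption and the concentration bound $h(\cdot)$ used in Theorem~\ref{thm:consistency}. First I would write $\bm x^{\mathcal W}_{t-1} = \DDIM(\bm x^{\mathcal W}_t, t-1)$ explicitly via \Cref{eqn:ddim_pmp}, and compare it with the \emph{unwatermarked} trajectory $\bm x_{t-1} = \DDIM(\bm x_t, t-1)$. Using the exact linearity of the PMP (\Cref{assump:1}), the difference $\bm x^{\mathcal W}_{t-1} - \bm x_{t-1}$ splits into a term coming from $\lambda\bm J_{\bm\theta,t}(\bm x_t)\Delta\bm x$ (the ``predicted $\bm x_0$'' direction) and a term $\tfrac{\sqrt{1-\alpha_{t-1}}}{\sqrt{1-\alpha_t}}\lambda\Delta\bm x$ minus the corresponding Jacobian piece (the ``direction to $\bm x_t$''). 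Collecting the coefficients of $\lambda\bm J_{\bm\theta,t}(\bm x_t)\Delta\bm x$ should produce exactly $-g(\alpha_t,\alpha_{t-1})$ after simplifying $\sqrt{\alpha_{t-1}} - \tfrac{\sqrt{1-\alpha_{t-1}}}{\sqrt{1-\alpha_t}}\sqrt{\alpha_t}$, so that $\bm x^{\mathcal W}_{t-1} - \bm x_{t-1} = \lambda\Delta\bm x \cdot(\text{pure shift}) - \lambda g(\alpha_t,\alpha_{t-1})\,\bm J_{\bm\theta,t}(\bm x_t)\Delta\bm x$. The component $\bm J_{\bm\theta,t}(\bm x_t)\Delta\bm x$ is then bounded in $\ell_2$ norm by $L\,h(r_t)$ with probability $1-r_t^{-1}$: this is the same event as in Theorem~\ref{thm:consistency}, which controls $\|\bm J_{\bm\theta,t}(\bm x_t)\Delta\bm x\|_2$ by splitting $\Delta\bm x$ into its projection onto the row space of $\bm J$ (dimension $r_t$, contributing the $r_t/d$ term) and applying a spherical-cap / norm-concentration estimate for the uniform distribution on $\mathbb S^{d-1}$ (the $\log(2r_t)$ term).

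Next I would apply one step of $\DDIMInv$ to $\bm x^{\mathcal W}_{t-1}$ to get $\bar{\bm x}^{\mathcal W}_t$, and subtract the quantity we actually want to recover, $\bm x^{\mathcal W}_t$. Writing $\DDIMInv(\cdot, t)$ through the PMP $\bm f_{\bm\theta,t-1}$ at timestep $t-1$ in the same way, the map $\bm x^{\mathcal W}_{t-1}\mapsto\bar{\bm x}^{\mathcal W}_t$ is (by exact linearity of $\bm f_{\bm\theta,t-1}$) an affine map whose linear part again combines an identity-type shift with a $g(\alpha_{t-1},\alpha_t)\,\bm J_{\bm\theta,t-1}(\cdot)$ term; composing with the previous step and noting that the pure-shift parts cancel against the original $\bm x^{\mathcal W}_t = \bm x_t + \lambda\Delta\bm x$, what remains is a sum of two Jacobian-applied terms: one $-\lambda g(\alpha_t,\alpha_{t-1})\bm J_{\bm\theta,t-1}(\cdot)(\cdots)$ arising from the outer step and a nested term $+\lambda g(\alpha_{t-1},\alpha_t)\bm J_{\bm\theta,t-1}(\cdot)\big(\Delta\bm x\big)$ minus the cross term $g(\alpha_{t-1},\alpha_t)g(\alpha_t,\alpha_{t-1})\bm J_{\bm\theta,t-1}\bm J_{\bm\theta,t}\Delta\bm x$. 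Factoring gives the bracket $-g(\alpha_t,\alpha_{t-1}) + g(\alpha_{t-1},\alpha_t)\big(1 - L\,g(\alpha_t,\alpha_{t-1})\big)$ once the operator norm bound $\|\bm J_{\bm\theta,t-1}\|_2\le L$ is used on the nested composition, together with $\|\bm J_{\bm\theta,t-1}\Delta\bm x\|_2 \le L\,h(r_{t-1})$ on the new event (probability $1-r_{t-1}^{-1}$). Taking $h(\max\{r_{t-1},r_t\})$ as the common bound and combining the two events by a union bound yields the claimed probability $1-r_t^{-1}-r_{t-1}^{-1}$.

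The main obstacle I expect is the bookkeeping in the second stage: the nested application of $\DDIMInv\circ\DDIM$ produces a product $\bm J_{\bm\theta,t-1}(\bm y)\bm J_{\bm\theta,t}(\bm x_t)\Delta\bm x$ evaluated at a \emph{shifted} base point $\bm y = \bm x^{\mathcal W}_{t-1}$ rather than at $\bm x_t$, and one must be careful that exact linearity (not just local linearity) is what makes the Taylor remainder vanish so that these Jacobians can be pulled out cleanly; it is precisely here that \Cref{assump:1}'s exactness is load-bearing. A secondary subtlety is getting the signs and the $\sqrt{1-\alpha}$ denominators in $g(x,y)$ to match: one needs the algebraic identities $\sqrt{\alpha_{t-1}} - \tfrac{\sqrt{1-\alpha_{t-1}}}{\sqrt{1-\alpha_t}}\sqrt{\alpha_t} = -g(\alpha_t,\alpha_{t-1})$ and the analogous one for the inversion step, after which the rest is routine triangle-inequality and operator-norm bounding. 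Finally, one should double-check that the concentration lemma behind $h(r_t)$ is applied to the correct rank at each timestep ($r_t$ for the step, $r_{t-1}$ for the inversion) and that replacing both by $\max\{r_{t-1},r_t\}$ is legitimate because $h$ is monotone increasing in its argument.
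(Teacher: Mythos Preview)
Your proposal is correct and follows essentially the same route as the paper: write one DDIM step and one DDIM-inverse step in PMP form, use the exact linearity of \Cref{assump:1} to factor the composition as $\bm x^{\mathcal W}_t + \lambda(\bm W_{t-1}\bm W_t - \bm I)\Delta\bm x$, observe that the identity pieces cancel, bound the three remaining Jacobian terms via triangle inequality, $\|\bm J_{\bm\theta,t-1}\|_2\le L$, and two applications of the Levy-type concentration underlying \Cref{thm:consistency}, then union-bound. One small slip to fix when you write it out: the ``nested term'' carries $\bm J_{\bm\theta,t}(\bm x_t)\Delta\bm x$ (not $\bm J_{\bm\theta,t-1}$), and the Jacobian in the outer step is evaluated at the \emph{unwatermarked} $\bm x_{t-1}$, which is exactly what the exact-linearity assumption buys you.
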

Similarly, the term $h(\max\{r_{t-1}, r_t\})$ is small because it only depends on the rank of the Jacobian $r_t$ or $r_{t-1}$ ($r_{t-1},r_t \ll d$) rather than the ambient dimension $d$. Additionally, the term $- g\left({\alpha_{t}, \alpha_{t-1}}\right) +  g\left({\alpha_{t-1}, \alpha_{t}}\right)\left(1 -L g\left({\alpha_{t}, \alpha_{t-1}}\right)\right) $ is also a small number based on the design of $\alpha_t$ for variance preserving (VP) noise scheduler \cite{ddpm}. Together, this implies that the difference between  $\bar{\bm x}^{\mathcal{W}}_{t}$ and $\bm x^{\mathcal{W}}_t$ is small and $\bm x^{\mathcal{W}}_t$ could be recovered by $\bar{\bm x}^{\mathcal{W}}_{t}$ from one-step DDIM. Therefore, \Cref{thm:detectability} implies that the injected watermark can be detected with high probability. 

\section{Experiments} \label{sec:experiment}
In this section, we present a comprehensive set of experiments to demonstrate the robustness and consistency of \textit{Shallow-Diffuse} across various datasets. We begin by highlighting its performance in terms of robustness and consistency in both the server scenario (\Cref{sec:server_scenario}) and the user scenario (\Cref{sec:user_scenario}).  We further explore the trade-off between robustness and consistency in \Cref{{sec:trade-off}}. Lastly, we provide extra \emph{multi-key identification} experiments in \Cref{sec:multi_key} and ablation studies on watermark pattern design (\Cref{sec:ablation_watermark_pattern}), 
watermarking embedded channel (\Cref{sec:ablation_watermarking_embedded_channel}), 
watermark injecting timestep $t$ (\Cref{sec:ablation_timestep}) and inference steps (\Cref{sec:ablation_inference_steps}).
\vspace{-0.1in}
\noindent \paragraph{Comparison baselines.} For the server scenario, we select the following non-diffusion-based methods: DWtDct \cite{cox2007digital}, DwtDctSvd \cite{cox2007digital}, RivaGAN \cite{zhang2019robust}, StegaStamp \cite{tancik2020stegastamp}; and diffusion-based methods: Stable Signature \cite{fernandez2023stable}, Tree-Ring Watermarks \cite{wen2023tree}, RingID \cite{ci2024ringid}, and Gaussian Shading \cite{yang2024gaussian}. In the user scenario, we adopt the same baseline methods, except for Stable Signature, as this method are not suitable for this setting. 

\vspace{-0.1in}
\noindent \paragraph{Evaluation datasets.}
We use Stable Diffusion 2-1-base \cite{stablediffusion} as the underlying model for our experiments, applying Shallow diffusion within its latent space. For the server scenario (\Cref{sec:server_scenario}), all diffusion-based methods are based on the same Stable Diffusion, with the original images $\bm x_0$ generated from identical initial seeds $\bm x_T$. Non-diffusion methods are applied to these same original images $\bm x_0$ in a post-watermarking process. A total of 5000 original images are generated for evaluation in this scenario. For the user scenario (\Cref{sec:user_scenario}), we utilize the MS-COCO \cite{coco}, 
and DiffusionDB datasets \cite{DiffusionDB}. The first one is a real-world dataset, while DiffusionDB is a collection of diffusion model-generated images. From each dataset, we select 500 images for evaluation. For the remaining experiments in \Cref{sec:trade-off} and \Cref{sec:app_ablation}, we use the server scenario and sample 100 images for evaluation.

\noindent \paragraph{Evaluation metrics.}
To evaluate image consistency, we use peak signal-to-noise ratio (PSNR) \cite{PSNR}, structural similarity index measure (SSIM) \cite{SSIM}, and Learned Perceptual Image Patch Similarity (LPIPS) \cite{LPIPS}, comparing watermarked images to their original counterparts. In the server scenario, we also assess the generation quality of the watermarked images using Contrastive Language-Image Pretraining Score (CLIP-Score) \cite{CLIP} and Fréchet Inception Distance (FID) \cite{FID}. To evaluate robustness, we plot the true positive rate (TPR) against the false positive rate (FPR) for the receiver operating characteristic (ROC) curve. We use the area under the curve (AUC) and TPR when FPR = 0.01 (TPR $@1\%$ FPR) as robustness metrics.

\noindent \paragraph{Attacks.} Robustness is comprehensively evaluated both under clean conditions (no attacks) and with \textbf{15 types of attacks}. Following \cite{an2024benchmarking}, we categorized them into three groups, including: distortion attack (JPEG compression, Gaussian blurring, Gaussian noise, color jitter, resize and restore, random drop, median blurring), regeneration attack (diffusion purification \cite{nie2022DiffPure}, VAE-based image compression models \cite{cheng2020learned, ballé2018variational}, stable diffusion-based image regeneration \cite{zhao2023generative}, 2 times and 4 times rinsing regenerations \cite{an2024benchmarking}) and adversarial attack (black-box and grey-box averaging attack \cite{yang2024can}). Here, we report only the TPR at 1\% FPR for the average robustness across each group and all attacks. Detailed settings and full experiment results of these attacks are provided in \Cref{sec:app_attacks}.

\subsection{Server Scenario Consistency and Robustness}
\label{sec:server_scenario}

\begin{table}[t]
\caption{\textbf{Generation quality, consistency and watermark robustness under the server scenario.} \textbf{Bold} indicates the best overall performance; \underline{Underline} denotes the best among diffusion-based methods.}
\begin{center}
\resizebox{\columnwidth}{!}{%
\begin{tabular}{l|cc|ccc|ccccc}
\Xhline{3\arrayrulewidth}
\multirow{2}{*}{Method } & \multicolumn{2}{c|}{Generation Quality} & \multicolumn{3}{c|}{Generation Consistency}  & \multicolumn{5}{c}{\makecell{Watermark Robustness \\ (TPR@1\%FPR$\uparrow$)}}                                                                                                    \\ \cline{2-11}
                                   & \multicolumn{1}{c}{CLIP-Score $\uparrow$}                                         & \multicolumn{1}{c|}{FID $\downarrow$}       & \multicolumn{1}{c}{PSNR $\uparrow$} & \multicolumn{1}{c}{SSIM $\uparrow$} & \multicolumn{1}{c|}{LPIPS $\downarrow$}                            & \multicolumn{1}{c}{Clean} & \multicolumn{1}{c}{Distortion} & \multicolumn{1}{c}{Regeneration} & \multicolumn{1}{c}{Adversarial} &  \multicolumn{1}{c}{Average} \\\hline
SD w/o WM   & 0.3669 & 25.56 &- &- &- &-         & - & - & - & -  \\
\hline
DwtDct                             & 0.3641  & 25.73 &40.32 &0.98 & \textbf{0.01} & 0.85 & 0.35 & 0.01 & 0.42 & 0.22\\
DwtDctSvd                          & 0.3629 & 26.00 & 40.19&0.98 &\textbf{0.01} &\textbf{1.00} & 0.74 & 0.07 & 0.01 & 0.37\\
RivaGAN                            & 0.3628  & 24.60 & \textbf{40.45} & \textbf{0.99} & \textbf{0.01}& 0.99 & 0.88 & 0.05 & \textbf{0.82} & 0.54\\ 
Stegastamp                          & 0.3410 & \textbf{24.59} &26.70 &0.85 & 0.08& \textbf{1.00}  & 0.99 & 0.48 & 0.05 & 0.66\\ \hline

Stable Signature     & 0.3622 & 30.86 &32.43 & 0.95& \underline{0.02} &\underline{\textbf{1.00}} & 0.59& 0.19 & \underline{0.99} & 0.48 \\
Tree-Ring & 0.3645 & 25.82 &16.61 & 0.64& 0.31&  \underline{\textbf{1.00}} & 0.88 & 0.87 & 0.06 & 0.77\\
RingID  & 0.3637  &27.13 &14.27 & 0.51& 0.42 &\underline{\textbf{1.00}} & \underline{\textbf{1.00}} & \underline{\textbf{1.00}} & 0.33 & 0.91 \\
Gaussian Shading     & 0.3663 & 26.17 & 11.04 &0.48 &0.54  & \underline{\textbf{1.00}} & \underline{\textbf{1.00}} & \underline{\textbf{1.00}} & 0.47 & \underline{\textbf{0.93}} \\
\hline
\textbf{Shallow Diffuse}   & \underline{\textbf{0.3669}} & \underline{25.60} &\underline{35.49} & \underline{0.96} & \underline{0.02} &  \underline{\textbf{1.00}} & \underline{\textbf{1.00}} & 0.98 & 0.54 & \underline{\textbf{0.93}} \\ \Xhline{3\arrayrulewidth}
\end{tabular}
}
\end{center}
\label{tab:comparison_server_scenario}
\end{table}

\Cref{tab:comparison_server_scenario} compares the performance of Shallow Diffuse with other methods in the server scenario. For reference, we also apply stable diffusion to generate images from the same random seeds, without adding watermarks (referred to as "SD w/o WM" in \Cref{tab:comparison_server_scenario}).
In terms of generation quality, Shallow Diffuse achieves the best FID and CLIP scores among all diffusion-based methods. It also demonstrates superior generation consistency, achieving the highest PSNR, SSIM, and LPIPS scores. Regarding robustness, Shallow Diffuse performs comparably to Gaussian Shading and RingID, while outperforming the remaining methods. Although Gaussian Shading and RingID show similar levels of generation quality and robustness in the server scenario, their poor consistency makes them less suitable for the user scenario. 

\vspace{-0.1in}
\subsection{User Scenario Consistency and Robustness}
\label{sec:user_scenario}

\begin{table}[t]

\caption{\textbf{Generation consistency and watermark robustness under the user scenario.} \textbf{Bold} indicates the best overall performance; \underline{Underline} denotes the best among diffusion-based methods.}
\begin{center}
    \resizebox{1\columnwidth}{!}{%
    \begin{tabular}{l|l|ccc|ccccc}
    \Xhline{3\arrayrulewidth}
    \multirow{2}{*}{} & \multirow{2}{*}{Method} & \multicolumn{3}{c|}{Generation Consistency} & \multicolumn{5}{c}{\makecell{Watermark Robustness \\ (AUC $\uparrow$/TPR@1\%FPR$\uparrow$)}} \\ \cline{3-10}
     &                                   & \multicolumn{1}{c}{PSNR $\uparrow$} & \multicolumn{1}{c}{SSIM $\uparrow$} & \multicolumn{1}{c|}{LPIPS $\downarrow$} & \multicolumn{1}{c}{Clean} & \multicolumn{1}{c}{Distortion} & \multicolumn{1}{c}{Regeneration} & \multicolumn{1}{c}{Adversarial} & \multicolumn{1}{c}{Average} \\
    \Xhline{3\arrayrulewidth}
    \multicolumn{1}{c}{}& \multicolumn{1}{c}{} &\\
    \Xhline{3\arrayrulewidth}
    \multirow{9}{*}{\rotatebox{90}{COCO}} 
    & SD w/o WM            & 32.28 & 0.78 & 0.06  & -         & -   & - & - & -   \\ \cline{2-10}
    & DwtDct                             & 37.88 & 0.97 & \textbf{0.02}  & 0.83 & 0.54 & 0.00 & 0.82 & 0.36\\
    & DwtDctSvd                          & 38.06 & \textbf{0.98} & \textbf{0.02}  & \textbf{1.00} & 0.76  & 0.06 & 0.00 & 0.38 \\
    & RivaGAN                            & \textbf{40.57} & \textbf{0.98} & 0.04  & \textbf{1.00} & 0.93 & 0.05 & \textbf{1.00} & 0.59 \\ 
    & Stegastamp                          & 31.88 & 0.86 & 0.08  & \textbf{1.00} & 0.97 & 0.47 & 0.26 & 0.68 \\ \cline{2-10}
    & Gaussian Shading             &10.17 & 0.23&0.65  &  \underline{\textbf{1.00}} & 0.99  & \underline{\textbf{1.00}} & 0.47 & 0.92   \\
    & Tree-Ring              & 28.22 & 0.57 & 0.41  & \underline{\textbf{1.00}} &0.90  & 0.95 & 0.31 & 0.84 \\
    & RingID                             & 12.21 & 0.38 & 0.58  & \underline{\textbf{1.00}} &0.98  & \underline{\textbf{1.00}} & \underline{0.79} & \underline{\textbf{0.96}} \\ \cline{2-10}
    & \textbf{Shallow Diffuse}    & \underline{32.11} & \underline{0.84} & \underline{0.05}  & \underline{\textbf{1.00}} & \underline{\textbf{1.00}} & 0.96 & 0.62 & 0.93\\
    
    \Xhline{3\arrayrulewidth}
    \multicolumn{1}{c}{}& \multicolumn{1}{c}{} &\\
    \Xhline{3\arrayrulewidth}
    \multirow{7}{*}{\rotatebox{90}{DiffusionDB}}
    & SD w/o WM            & 33.42 & 0.85 & 0.03  & -         & - & - & - & -   \\ \cline{2-10}
    & DwtDct                             & 37.77 & 0.96 & \textbf{0.02}  & 0.76 & 0.34 & 0.01 & 0.78 & 0.27 \\
    & DwtDctSvd                          & 37.84 & 0.97 & \textbf{0.02}  & \textbf{1.00} & 0.74  & 0.04 & 0.00  & 0.36 \\
    & RivaGAN                            & \textbf{40.6}  & \textbf{0.98} & 0.04  & 0.98 & 0.88 & 0.04 & \textbf{0.98}  & 0.56 \\  
    & Stegastamp                          & 32.03 & 0.85 & 0.08  & \textbf{1.00} & 0.96 & 0.46 & 0.26 & 0.67 \\ \cline{2-10}
    & Gaussian Shading             & 10.61& 0.27&0.63  & \textbf{1.00} & 0.99 & \underline{\textbf{1.00}} & 0.46  &  0.92   \\
    & Tree-Ring             & 28.3  & 0.62 & 0.29  & \underline{\textbf{1.00}} & 0.81  & 0.87 & 0.26  & 0.76    \\
    & RingID                             & 12.53 & 0.45 & 0.53  & \underline{\textbf{1.00}} & 0.99 & \underline{\textbf{1.00}} & \underline{0.79} & \underline{\textbf{0.97}} \\ \cline{2-10}
    & \textbf{Shallow Diffuse}    & \underline{33.07}  & \underline{0.89} & \underline{0.03}  & \underline{\textbf{1.00}} & \underline{\textbf{1.00}}  & 0.93 & 0.59 & 0.92 \\
    \Xhline{3\arrayrulewidth}
    \end{tabular}}
\end{center}
\label{tab:comparison_user_scenario}

\end{table}

Under the user scenario, \Cref{tab:comparison_user_scenario} presents a comparison of  Shallow Diffuse against other methods.  In terms of consistency, Shallow Diffuse outperforms all other diffusion-based approaches. To measure the upper bound of diffusion-based methods, we apply stable diffusion with $\hat{\bm x}_0 = \DDIM(\DDIMInv(\bm x_0, t, \varnothing), 0, \varnothing)$ , and measure the data consistency between $\hat{\bm x}_0$ and $\bm x_0$ (denoted in SD w/o WM in \Cref{tab:comparison_user_scenario}). The upper bound is constrained by errors introduced through DDIM inversion, and Shallow Diffuse comes the closest to reaching this limit. For non-diffusion-based methods, which are not affected by DDIM inversion errors, better image consistency is achievable. However, as visualized in \Cref{fig:i2i_visual_consistency}, Shallow Diffuse also demonstrates strong generation consistency. In terms of robustness, Shallow Diffuse performs comparably to RingID and Gaussian shading, while outperforming all other methods across both datasets. Notably, RingID and Gaussian achieve high robustness at the sacrifice of poor generation consistency (see \Cref{tab:comparison_user_scenario} and \Cref{fig:i2i_visual_consistency}). In contrast, Shallow Diffuse is the only method that balances strong generation consistency with high watermark robustness, making it suitable for both user and server scenarios.

\vspace{-0.1in}
\subsection{Trade-off between Consistency and Robustness}\label{sec:trade-off}

\begin{figure}[t]
\centering\includegraphics[width=\linewidth]{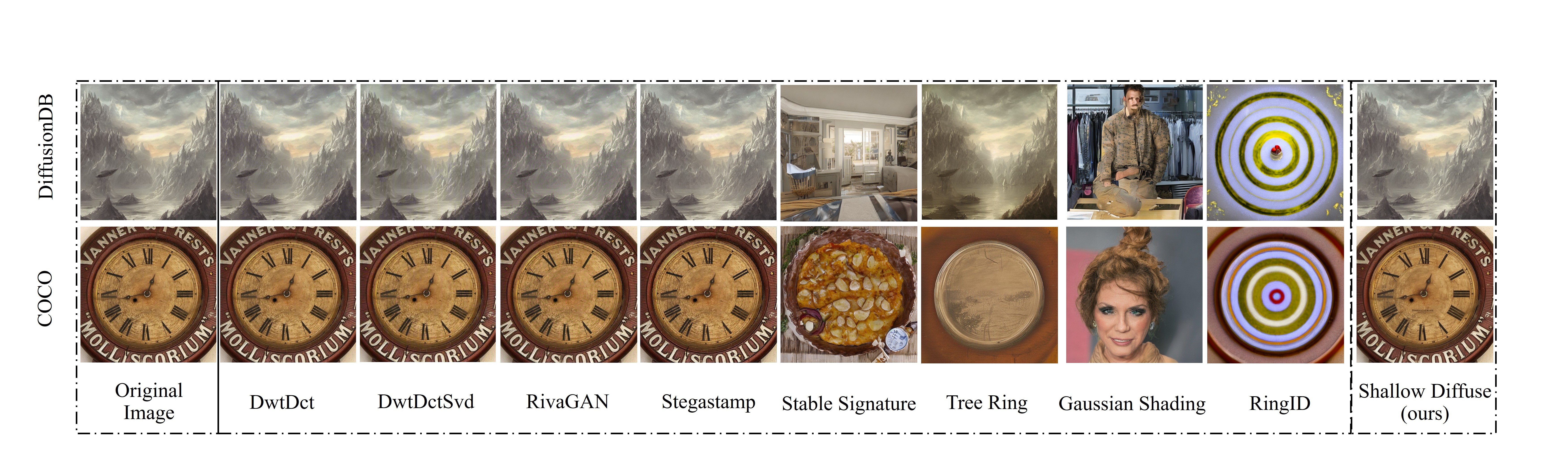}
    \caption{\textbf{Generation consistency under user scenarios.} We compare the visualization quality of our method against DwtDct, DwtdctSvd, RivaGAN, Stegastamp, Stable Signature, Tree Ring, Gaussian Shading, and RingID across the DiffusionDB, and COCO datasets.}
    \label{fig:i2i_visual_consistency}
\end{figure}


\Cref{fig:comparison} bottom illustrates the trade-off between consistency and robustness \footnote{In this experiment, we evaluate robustness against distortion attacks.} for Shallow Diffuse and other baselines. As the radius of $\bm M$ increases, the watermark intensity $\lambda$ also increases, reducing image consistency but improving robustness. By adjusting the radius of $\bm M$, we plot the trade-off using PSNR, SSIM, and LPIPS against TPR@1\%FPR. From \Cref{fig:comparison} bottom, curve of Shallow Diffuse is consistently above the curve of Tree-Ring Watermarks and RingID, demonstrating Shallow Diffuse's better consistency at the same level of robustness.

\vspace{-0.1in}
\subsection{Ablation Study over Injecting Timesteps.} 

\begin{figure}[t]
\captionsetup[subfigure]{justification=centering}
\begin{center}
   \begin{subfigure}{0.45\textwidth}
   	\includegraphics[width = \linewidth]{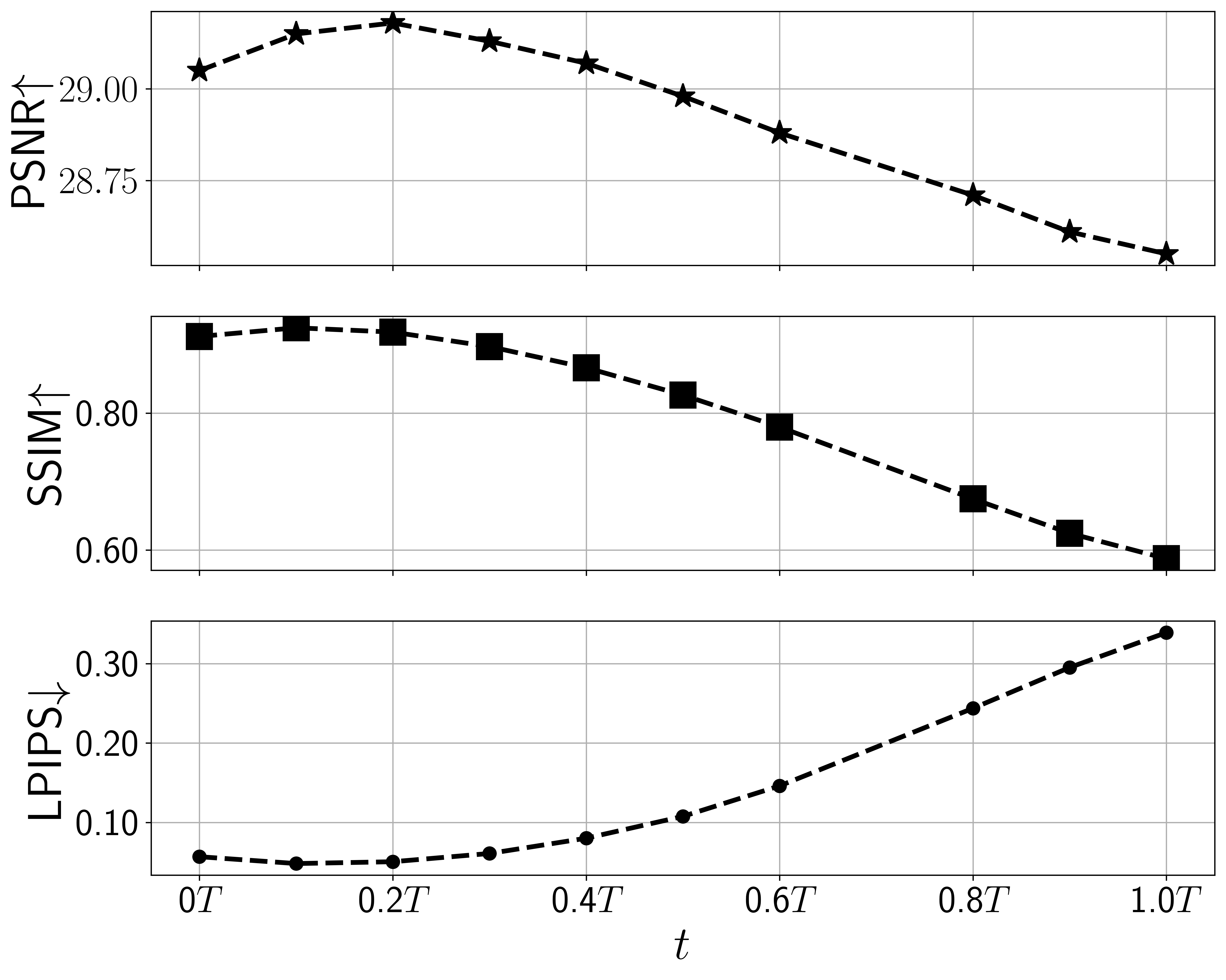}
   \caption{\textbf{Consistency}}  
   \label{fig:low-rank-robust}
   \end{subfigure}  
   \begin{subfigure}{0.445\textwidth}
   	\includegraphics[width = \linewidth]{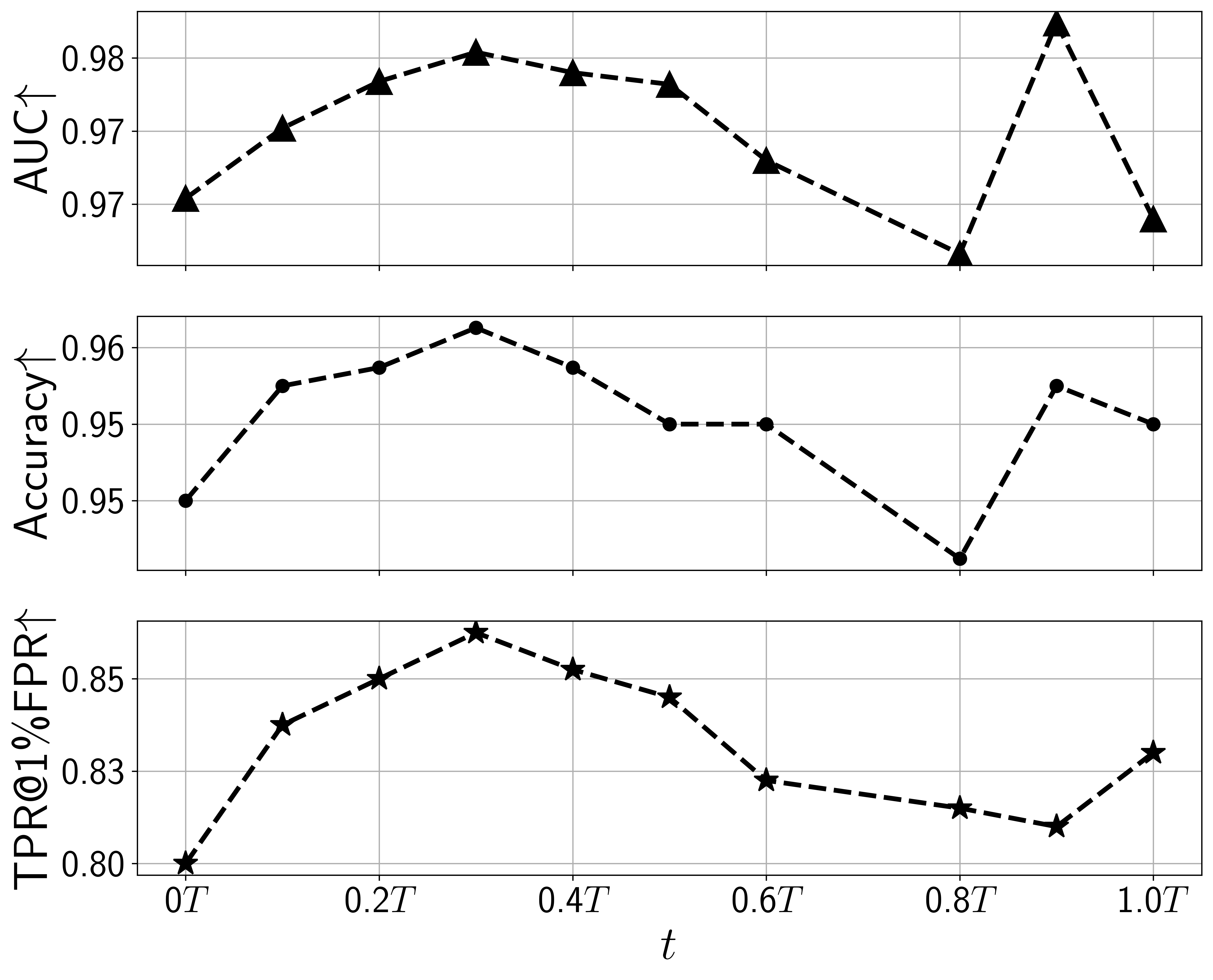}
   \caption{\textbf{Robustness}} 
   \label{fig:low-rank-consistent}
   \end{subfigure}  
   \caption{\textbf{Ablation study at different timestep $t$.} We evaluate the consistency and robustness under user scenarios when watermarks are injected at varying timesteps. }\label{fig:low-rank-watermark}
\end{center}
\end{figure} 

\label{sec:ablation_timestep}
\Cref{fig:low-rank-watermark} shows the relationship between the watermark injection timestep $t$ and both consistency and robustness \footnote{In this experiment, we do not incorporate additional techniques like channel averaging or enhanced watermark patterns. Therefore, when $t = 1.0T$, the method is equivalent to Tree-Ring.}. Shallow Diffuse achieves optimal consistency at $t = 0.2T$ and optimal robustness at $t = 0.3T$. In practice, we select $t = 0.3T$. This result aligns with the intuitive idea proposed in \Cref{subsec:inject} and the theoretical analysis in \Cref{sec:analysis}: low-dimensionality enhances both data generation consistency and watermark detection robustness. However, according to \cite{chen2024exploring}, the optimal timestep $r_t$ for minimizing $r_t$ satisfies $t^{*} \in [0.5T, 0.7T]$. We believe the best consistency and robustness are not achieved at $t^{*}$ due to the error introduced by $\DDIMInv$. As $t$ increases, this error grows, leading to a decline in both consistency and robustness. Therefore, the best tradeoff is reached at $t \in [0.2T, 0.3T]$, where $\bm J_{\bm \theta, t}(\bm x_t)$ remains low-rank but $t$ is still below $t^{*}$. Another possible explanation is the gap between the image space and latent space in diffusion models. The rank curve in \cite{chen2024exploring} is evaluated for an image-space diffusion model, whereas Shallow Diffuse operates in the latent-space diffusion model (e.g., Stable Diffusion).

\section{Conclusion}

We proposed Shallow Diffuse, a novel and flexible watermarking technique that operates seamlessly in both server-side and user-side scenarios. By decoupling the watermark from the sampling process, Shallow Diffuse achieves enhanced robustness and greater consistency. Our theoretical analysis demonstrates both the consistency and detectability of the watermarks. Extensive experiments further validate the superiority of Shallow Diffuse over existing approaches.

\section{Acknowledgement}

We acknowledge funding support from NSF CCF-2212066, NSF CCF-2212326, NSF IIS 2402950,
ONR N000142512339, DARPA HR0011578254, and Google Research Scholar Award.

\printbibliography
\newpage
\appendix
\section{Impact Statement} \label{sec:impact_statement}
In this work, we introduce Shallow Diffuse, a training-free watermarking technique that hides a high-frequency signal in the low-dimensional latent subspaces of diffusion models, enabling invisible yet reliably detectable attribution for both server-side  text-to-image generation and user-side post-processing. As synthetic and authentic images flooding the internet, establishing verifiable provenance is essential for copyright protection, misinformation mitigation, and scientific reproducibility. Our method preserves perceptual quality, withstands a wide range of image attacks, and requires no model retraining, making it practical for deployment. We further provide theoretical guarantees on imperceptibility and watermark recoverability, grounded in the low-rank structure of the diffusion latent space. We believe our work contributes to the development of trustworthy generative models and can inform future standards for media authentication, digital content tracking, and responsible AI deployment. While our technique could potentially be repurposed for covert signaling, we emphasize that our goal is to enhance transparency and accountability in generative AI. We encourage responsible use of this research in line with ethical guidelines and broader societal interests.

\section{Related Work} \label{sec:related_works}
\subsection{Image Watermarking}
\label{related:watermarking}

Image watermarking has long been a crucial method for protecting intellectual property in computer vision \cite{cox2007digital, solachidis2001circularly, chang2005svd, liu2019optimized}. Traditional techniques primarily focus on user-side watermarking, where watermarks are embedded into images post-generation. These methods \cite{al2007combined,navas2008dwt} typically operate in the frequency domain to ensure the watermarks are imperceptible. However, such watermarks remain vulnerable to adversarial attacks and can become undetectable after applying simple image manipulations like blurring. 

Early deep learning-based approaches to watermarking \cite{zhang2024robustimagewatermarkingusing, fernandez2023stable, ahmadi2020redmark,lee2020convolutional,zhu2018hiddenhidingdatadeep}  leveraged neural networks to embed watermarks. While these methods improved robustness and imperceptibility, they often suffer from high computational costs during fine-tuning and lack flexibility. Each new watermark requires additional fine-tuning or retraining, limiting their practicality. 

More recently, diffusion model-based watermarking techniques have gained attraction due to their ability to seamlessly integrate watermarks during the generative process without incurring extra computational costs. Techniques such as \cite{wen2023tree, yang2024gaussian, ci2024ringid} embed watermarks directly into the initial noise and retrieve the watermark by reversing the diffusion process. These methods enhance robustness and invisibility but are typically restricted to server-side watermarking, requiring access to the initial random seed. Moreover, the watermarks introduced by \cite{wen2023tree, ci2024ringid} significantly alter the data distribution, leading to variance towards watermarks in generated outputs (as shown in \Cref{fig:comparison}). Recent work \cite{huang2024robin} proposes embedding the watermark at an intermediate time step using adversarial optimization.

 In contrast to \cite{wen2023tree, ci2024ringid}, our proposed shallow diffuse disentangles the watermark embedding from the generation process by leveraging the high-dimensional null space. This approach significantly improves watermark consistency while maintaining robustness. Furthermore, unlike \cite{huang2024robin}, which employs adversarial optimization, our method is entirely training-free. Additionally, we provide both empirical and theoretical validation for the choice of the intermediate time step. To the best of our knowledge, this is the first training-free method that supports watermark embedding for both server-side and user-side applications while maintaining high robustness and consistency.

\subsection{Low-dimensional Subspace in Diffusion Model}
\label{related:low-dimensional subspace}

In recent years, there has been growing interest in understanding deep generative models through the lens of the manifold hypothesis \cite{loaiza-ganem2024deep}. This hypothesis suggests that high-dimensional real-world data actually lies in latent manifolds with a low intrinsic dimension. Focusing on diffusion models, \cite{stanczuk2024diffusion} empirically and theoretically shows that the approximated score function (the gradient of the log density of a noise-corrupted data distribution) in diffusion models is orthogonal to a low-dimensional subspace. Building on this, \cite{wang2024diffusion, chen2024exploring} find that the estimated posterior mean from diffusion models lies within this low-dimensional space. Additionally, \cite{chen2024exploring} discovers strong local linearity within the space, suggesting that it can be locally approximated by a linear subspace. This observation motivates our \Cref{assump:1}, where we assume the estimated posterior mean lies in a low-dimensional subspace.

Building upon these findings, \cite{stanczuk2024diffusion, kamkari2024geometric} introduce a local intrinsic dimension estimator, while \cite{loaiza-ganem2024deep} proposes a method for detecting out-of-domain data. \cite{wang2024diffusion} offers theoretical insights into how diffusion model training transitions from memorization to generalization, and \cite{chen2024exploring, manor2024zeroshot} explores the semantic basis of the subspace to achieve disentangled image editing. Unlike these previous works, our approach leverages the low-dimensional subspace for watermarking, where both empirical and theoretical evidence demonstrates that this subspace enhances robustness and consistency.

\section{Additional Experiments} \label{sec:app_ablation}

\subsection{Details about Attacks}  \label{sec:app_attacks}
In this work, we intensively tested our method on four different watermarking attacks, both in the server scenario and in the user scenario. These watermarking attacks can be categorized into three groups, including:
\begin{itemize}[leftmargin=*]
    \item \textbf{Distortion attack}
    \begin{itemize}[leftmargin=*]
        \item JPEG compression (JPEG) with a compression rate of 25\%.
        \item Gaussian blurring (G.Blur) with an $8 \times 8$ filter size.
        \item Gaussian noise (G.Noise) with $\sigma = 0.1$.
        \item Color jitter (CJ) with brightness factor uniformly ranges between 0 and 6.
        \item Resize and restore (RR). Resize to 50\% of pixels and restore to original size.
        \item Random drop (RD). Random drop a square with 40\% of pixels.
        \item Median blurring (M.Blur) with a $7 \times 7$ median filter.
    \end{itemize}
    \item \textbf{Regeneration attack}
    \begin{itemize}[leftmargin=*]
        \item Diffusion purification \cite{nie2022DiffPure} (DiffPure)  with the purified step at 0.3T.
        \item VAE-based image compression  \cite{cheng2020learned} (IC1) and \cite{ballé2018variational} (IC2), with a quality level of 3.
        \item Diffusion-based image regeneration  (IR) \cite{zhao2023generative}.
        \item Rinsing regenerations \cite{an2024benchmarking}) with 2 times (Rinse2x) and 4 times (Rinse4x). 
    \end{itemize}
    \item \textbf{Adversarial attack}
    \begin{itemize}[leftmargin=*]
        \item Blackbox averaging (BA) and greybox averaging (GA) watermarking removal attack \cite{yang2024can}.
    \end{itemize}
\end{itemize}

Visualizations of these attacks are in \Cref{fig:vis-attack}. Detailed experiments for \Cref{tab:comparison_server_scenario} (\Cref{tab:comparison_user_scenario}) on the above attacks are reported by groups, with the distortion attack in \Cref{tab:comparison_details_server_scenario_distortion} (\Cref{tab:comparison_details_user_scenario_distortion}) and the regeneration and adversarial attacks in \Cref{tab:comparison_details_server_scenario_regeneration_adversarial} (\Cref{tab:comparison_details_user_scenario_regeneration_adversarial})

\begin{figure*}[htbp]
\captionsetup[subfigure]{justification=centering}
\begin{center}
   \begin{subfigure}{0.12\textwidth}
   	\includegraphics[width = 1\linewidth]{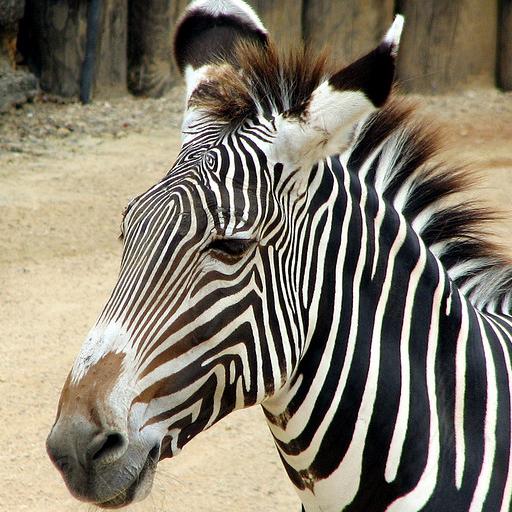}
   \caption{\textbf{Clean}} 
   \end{subfigure}
   \begin{subfigure}{0.12\textwidth}
   	\includegraphics[width = 1\linewidth]{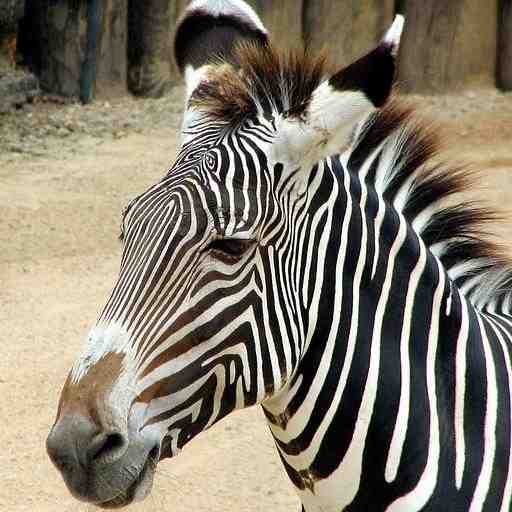}
   \caption{\textbf{JPEG}} 
   \end{subfigure}  
   \begin{subfigure}{0.12\textwidth}
   	\includegraphics[width = 1\linewidth]{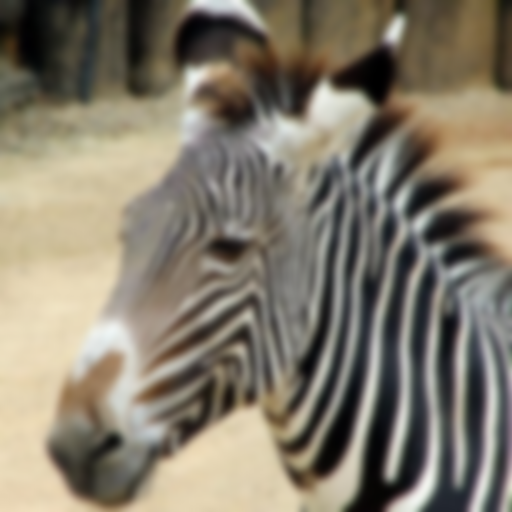}
   \caption{\textbf{G.Blur}} 
   \end{subfigure}  
   \begin{subfigure}{0.12\textwidth}
   	\includegraphics[width = 1\linewidth]{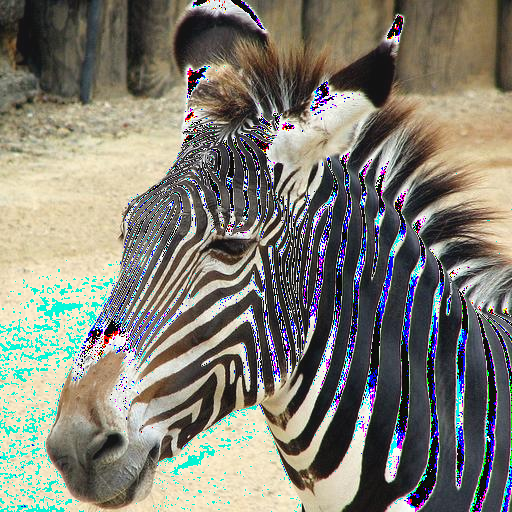}
   \caption{\textbf{G.Noise}} 
   \end{subfigure} 
    \begin{subfigure}{0.12\textwidth}
   	\includegraphics[width = 1\linewidth]{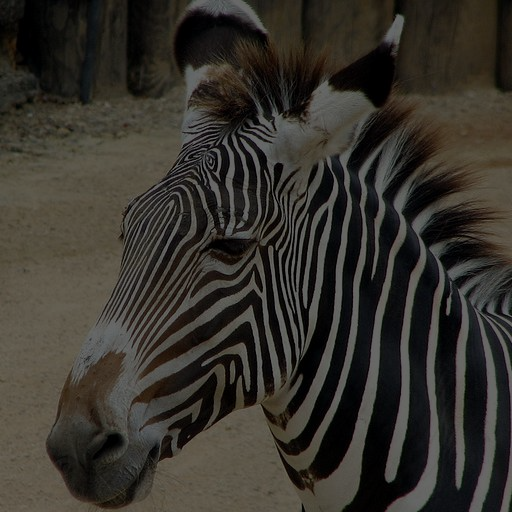}
   \caption{\textbf{CJ}} 
   \end{subfigure}
   \begin{subfigure}{0.12\textwidth}
   	\includegraphics[width = 1\linewidth]{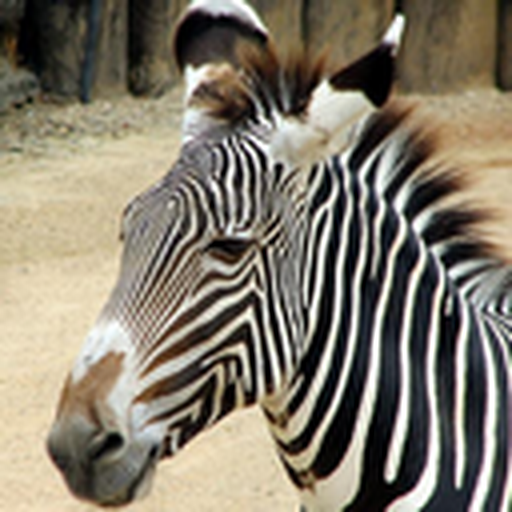}
   \caption{\textbf{RR}} 
   \end{subfigure}  
   \begin{subfigure}{0.12\textwidth}
   	\includegraphics[width = 1\linewidth]{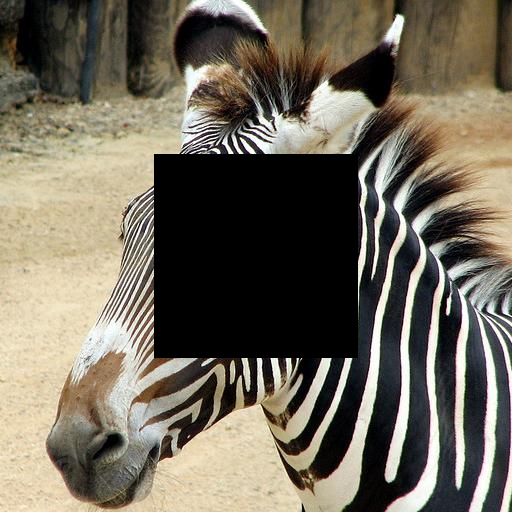}
   \caption{\textbf{RD}} 
   \end{subfigure}  \\
   \begin{subfigure}{0.12\textwidth}
   	\includegraphics[width = 1\linewidth]{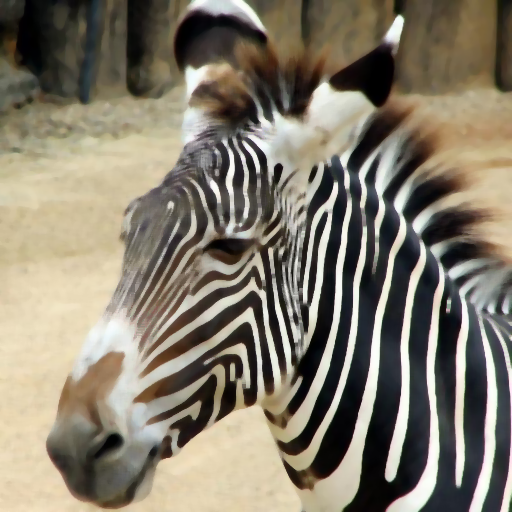}
   \caption{\textbf{M.Blur}} 
   \end{subfigure} 
    \begin{subfigure}{0.12\textwidth}
   	\includegraphics[width = 1\linewidth]{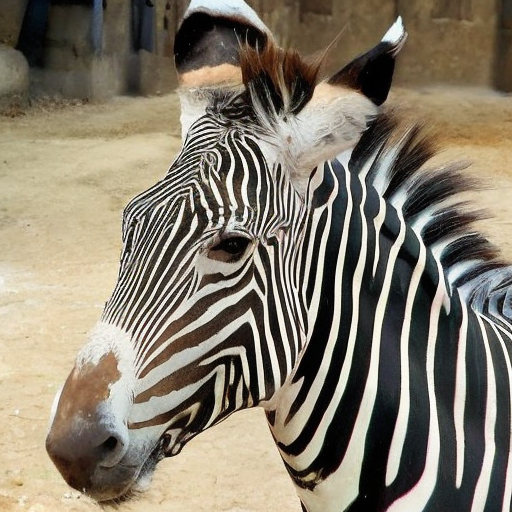}
   \caption{\textbf{DiffPure}} 
   \end{subfigure}  
   \begin{subfigure}{0.12\textwidth}
   	\includegraphics[width = 1\linewidth]{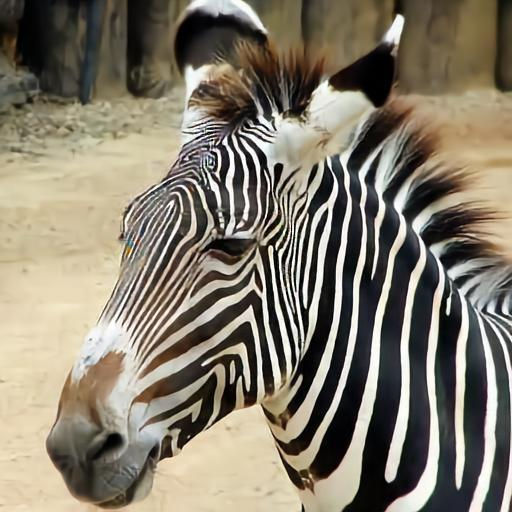}
   \caption{\textbf{IC1}} 
   \end{subfigure}  
   \begin{subfigure}{0.12\textwidth}
   	\includegraphics[width = 1\linewidth]{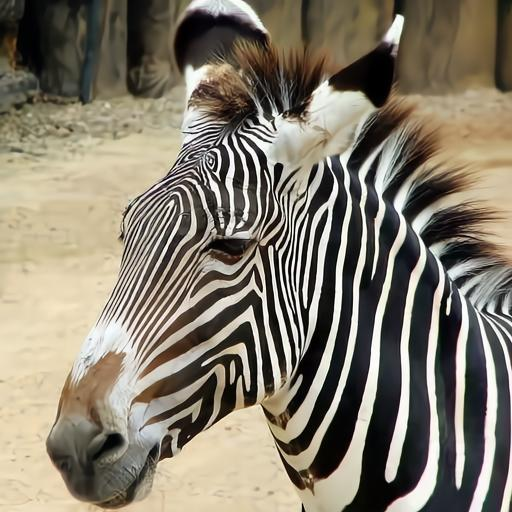}
   \caption{\textbf{IC2}} 
   \end{subfigure}  
   \begin{subfigure}{0.12\textwidth}
   	\includegraphics[width = 1\linewidth]{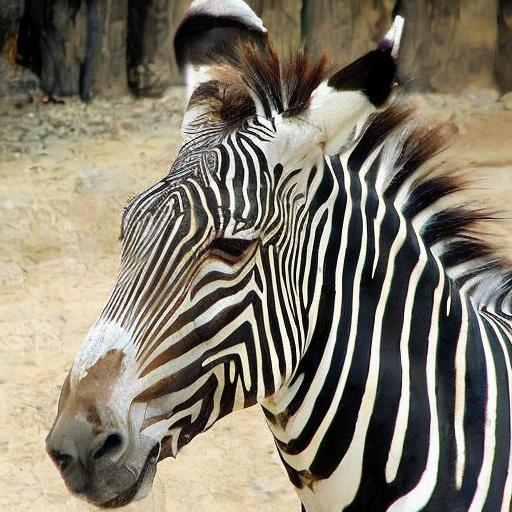}
   \caption{\textbf{IR}} 
   \end{subfigure}  
   \begin{subfigure}{0.12\textwidth}
   	\includegraphics[width = 1\linewidth]{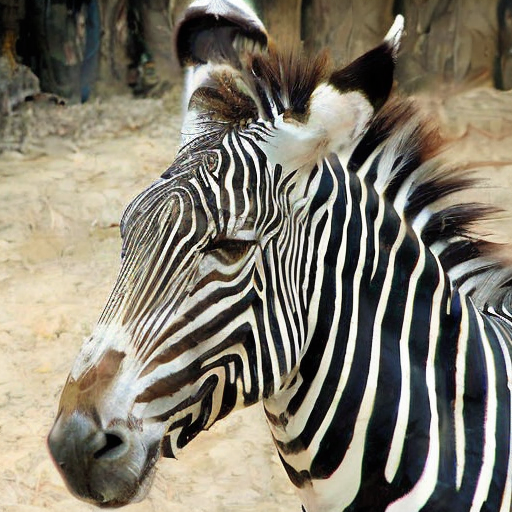}
   \caption{\textbf{Rinse2x}} 
   \end{subfigure}  
   \begin{subfigure}{0.12\textwidth}
   	\includegraphics[width = 1\linewidth]{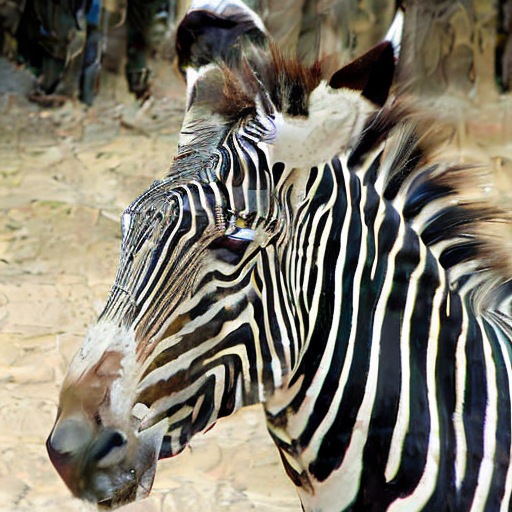}
   \caption{\textbf{Rinse4x}} 
   \end{subfigure}  \\
   \caption{\textbf{Visualization of different attacks.}}\label{fig:vis-attack}
    \vspace{-0.2in}
\end{center}
\end{figure*}

\begin{table}[htbp]
\caption{\textbf{Watermarking Robustness for distortion attacks under the server scenario.}}
\begin{center}
\resizebox{\columnwidth}{!}{%
\begin{tabular}{l|ccccccccc}
\Xhline{3\arrayrulewidth}
\multirow{2}{*}{Method } & \multicolumn{9}{c}{Watermarking Robustness (AUC $\uparrow$/TPR@1\%FPR$\uparrow$)}                                                                                                    \\ \cline{2-10} 
                              & \multicolumn{1}{c}{Clean} & \multicolumn{1}{c}{JPEG} & \multicolumn{1}{c}{G.Blur} & \multicolumn{1}{c}{G.Noise} & \multicolumn{1}{c}{CJ} & \multicolumn{1}{c}{RR} & \multicolumn{1}{c}{RD} & \multicolumn{1}{c}{M.Blur} & \multicolumn{1}{c}{Distortion Average} \\
\hline
DwtDct                             & 0.97/0.85 & 0.47/0.00 & 0.51/0.02 & 0.96/0.78 & 0.53/0.15 & 0.66/0.14 & 0.99/0.88 & 0.58/0.01  & 0.71/0.35\\
DwtDctSvd                          & \textbf{1.00/1.00} & 0.64/0.10 & 0.96/0.70 & 0.99/0.99 & 0.53/0.12 & 0.99/0.99 & \textbf{1.00/1.00} & \textbf{1.00/1.00}  &  0.89/0.74\\
RivaGAN                            & 1.00/0.99 & 0.94/0.69 & 0.96/0.76 & 0.97/0.88 & 0.95/0.79 & 0.99/0.98 & 0.99/0.98 & 0.99/0.97 & 0.97/0.88\\
Stegastamp & \textbf{1.00/1.00} & \textbf{1.00/1.00}  & 1.00/0.95 & 0.98/0.97 & 1.00/0.97    & \textbf{1.00/1.00} & \textbf{1.00/1.00} & \textbf{1.00/1.00}  & 1.00/0.99\\
\hline
Stable Signature     & \textbf{1.00/1.00}  & 0.99/0.76 & 0.57/0.00 & 0.71/0.14 & 0.96/0.87     & 0.90/0.34 & \textbf{1.00/1.00} & 0.95/0.62 & 0.89/0.59\\
Tree-Ring Watermarks &  \textbf{1.00/1.00}  & 0.99/0.97 & 0.98/0.98 & 0.94/0.50 & 0.96/0.67    &\textbf{1.00/1.00} &0.99/0.97 &0.99/0.94 & 0.98/0.88\\
RingID  & \textbf{1.00/1.00} &\textbf{1.00/1.00}  &\textbf{1.00/1.00}  &1.00/0.99  &0.99/0.98     & \textbf{1.00/1.00} & \textbf{1.00/1.00} & \textbf{1.00/1.00} & \textbf{1.00/1.00} \\
Gaussian Shading     &\textbf{1.00/1.00}  & \textbf{1.00/1.00}    & \textbf{1.00/1.00}    & \textbf{1.00/1.00}    & \textbf{1.00/1.00}       & \textbf{1.00/1.00} & \textbf{1.00/1.00} & \textbf{1.00/1.00} & \textbf{1.00/1.00}\\
\hline
\textbf{Shallow Diffuse (ours)}   & \textbf{1.00/1.00} &\textbf{1.00/1.00}  &\textbf{1.00/1.00}  &\textbf{1.00/1.00}  &\textbf{1.00/1.00}     & \textbf{1.00/1.00}&\textbf{1.00/1.00} &\textbf{1.00/1.00} & \textbf{1.00/1.00}\\ \Xhline{3\arrayrulewidth}

\end{tabular}}
\end{center}
\label{tab:comparison_details_server_scenario_distortion}
\end{table}

\begin{table}[htbp]
\caption{\textbf{Watermarking Robustness for regeneration and adversarial attacks under the server scenario.}}
\begin{center}
\resizebox{\columnwidth}{!}{%
\begin{tabular}{l|ccccccc|ccc}
\Xhline{3\arrayrulewidth}
\multirow{2}{*}{Method} & \multicolumn{10}{c}{Watermarking Robustness (AUC $\uparrow$/TPR@1\%FPR$\uparrow$)} \\
\cline{2-11}
& DiffPure & IC1 & IC2 & IR & Rinse2x & Rinse4x & Regeneration Average & BA & GA & Adversarial Average\\
\hline
DwtDct                & 0.50/0.00 & 0.52/0.01 & 0.49/0.00 & 0.50/0.00 & 0.77/0.04 & 0.80/0.03 &  0.60/0.01 & 0.27/0.00 & 0.99/0.84 & 0.63/0.42 \\
DwtDctSvd             & 0.51/0.02 & 0.73/0.03 & 0.68/0.04 & 0.70/0.07 & 0.78/0.18 & 0.78/0.10 & 0.70/0.07 & 0.86/0.02 & 0.17/0.00 & 0.52/0.01 \\
RivaGAN               & 0.73/0.16 & 0.65/0.03 & 0.63/0.04 & 0.56/0.00 & 0.64/0.03 & 0.58/0.02 & 0.63/0.05 & 0.94/0.64 & 1.00/1.00 & 0.97/0.82\\
Stegastamp            & 0.81/0.29 & 1.00/0.97 & 1.00/0.99 & 0.90/0.43 & 0.75/0.13 & 0.67/0.06 & 0.85/0.48 &    0.63/0.03     &     0.68/0.06      &       0.66/0.05    \\
\Xhline{1\arrayrulewidth}
Stable Signature      & 0.54/0.01 & 0.93/0.58 & 0.91/0.50 & 0.67/0.02 & 0.64/0.01 & 0.54/0.01 & 0.71/0.19 &     1.00/0.98      &     1.00/1.00      &     \textbf{1.00/0.99}        \\  
Tree-Ring Watermarks  & 0.98/0.73 &0.99/0.97 &0.99/0.98 & 0.99/0.92 & 0.98/0.88 & 0.96/0.75 & 0.98/0.87 & 0.16/0.08 & 0.05/0.03 & 0.11/0.06 \\
RingID                & \textbf{1.00/1.00} & \textbf{1.00/1.00} & \textbf{1.00/1.00} & \textbf{1.00/1.00} & \textbf{1.00/1.00} & 1.00/0.99 & \textbf{1.00/1.00} & 0.44/0.35 & 0.40/0.31 & 0.42/0.33 \\
Gaussian Shading      & \textbf{1.00/1.00} &\textbf{1.00/1.00} &\textbf{1.00/1.00} &\textbf{1.00/1.00} & \textbf{1.00/1.00} & \textbf{1.00/1.00} & \textbf{1.00/1.00} & 0.53/0.48 & 0.52/0.47 &  0.53/0.47 \\
\Xhline{1\arrayrulewidth}
\textbf{Shallow Diffuse (ours)} & \textbf{1.00/1.00} &\textbf{1.00/1.00} &\textbf{1.00/1.00} & 1.00/0.99 & 1.00/1.00 & 0.99/0.90 & 1.00/0.98 & 0.57/0.45 & 0.70/0.63 & 0.64/0.54\\
\Xhline{3\arrayrulewidth}
\end{tabular}}
\end{center}
\label{tab:comparison_details_server_scenario_regeneration_adversarial}
\end{table}

\begin{table}[htbp]
\caption{\textbf{Watermarking Robustness for distortion attacks under the user scenario.}}
\resizebox{\columnwidth}{!}{%
\begin{tabular}{l|ccccccccc}
\Xhline{3\arrayrulewidth}
\multirow{2}{*}{Method } & \multicolumn{9}{c}{Watermarking Robustness (AUC $\uparrow$/TPR@1\%FPR$\uparrow$)}                                                                                                    \\ \cline{2-10} 
                              & \multicolumn{1}{c}{Clean} & \multicolumn{1}{c}{JPEG} & \multicolumn{1}{c}{G.Blur} & \multicolumn{1}{c}{G.Noise} & \multicolumn{1}{c}{CJ} & \multicolumn{1}{c}{RR} & \multicolumn{1}{c}{RD} & \multicolumn{1}{c}{M.Blur}  & \multicolumn{1}{c}{Average} \\
 &   &  &  &  &     & & & & \\
\textbf{COCO Dataset}                        \\
\Xhline{3\arrayrulewidth}
DwtDct                               & 0.98/0.83 & 0.50/0.01 & 0.50/0.00 & 0.97/0.81 & 0.54/0.14 & 0.67/0.17 & 0.99/0.93 & 0.59/0.05 & 0.64/0.54\\
DwtDctSvd                            & \textbf{1.00/1.00} & 0.64/0.13 & 0.98/0.83 & 0.99/0.99 & 0.54/0.13 & 1.00/1.00 & 1.00/1.00 & \textbf{1.00/1.00}  & 0.89/0.76\\
RivaGAN                              & \textbf{1.00/1.00} & 0.97/0.86 & 0.98/0.86 & 0.99/0.94 & 0.96/0.82 & 1.00/1.00 & 1.00/1.00 & \textbf{1.00/1.00}  & 0.99/0.93\\ 
Stegastamp & \textbf{1.00/1.00} & \textbf{1.00/1.00} & 0.99/0.90  & 0.90/0.87  & 1.00/0.98    & 1.00/0.99 & 1.00/0.99 & \textbf{1.00/1.00} & 0.99/0.97\\
\hline
Tree-Ring Watermarks            & \textbf{1.00/1.00} & 0.99/0.87 & 0.99/0.86 & \textbf{1.00/1.00} & 0.88/0.49      & \textbf{1.00/1.00} & \textbf{1.00/1.00} & \textbf{1.00/1.00}  & 0.98/0.90\\
RingID                         & \textbf{1.00/1.00} & \textbf{\textbf{1.00/1.00}} & \textbf{1.00/1.00} & 0.98/0.86 &  \textbf{1.00/0.99}    & \textbf{1.00/1.00} & \textbf{1.00/1.00} & \textbf{1.00/1.00} & 1.00/0.98 \\
Gaussian Shading & \textbf{1.00/1.00} & \textbf{1.00/1.00} & \textbf{1.00/1.00} & \textbf{1.00/1.00} & 1.00/0.95 & \textbf{1.00/1.00} & \textbf{1.00/1.00} & \textbf{1.00/1.00} & 1.00/0.99 \\

\hline
\textbf{Shallow Diffuse (ours)}                          & \textbf{1.00/1.00} & \textbf{1.00/1.00} & \textbf{1.00/1.00} & \textbf{1.00/1.00} & \textbf{1.00/0.99}      & \textbf{1.00/1.00} & \textbf{1.00/1.00} & \textbf{1.00/1.00} & \textbf{1.00/1.00} \\
 \\
\textbf{DiffusionDB Dataset}               \\
\Xhline{3\arrayrulewidth}
DwtDct                             & 0.96/0.76 & 0.47/0.002 & 0.51/0.018 & 0.96/0.78 & 0.53/0.15     & 0.66/0.14 & 0.99/0.88 & 0.58/0.01 & 0.71/0.34 \\
DwtDctSvd                          & \textbf{1.00/1.00} & 0.64/0.10 & 0.96/0.70 & 0.99/0.99 & 0.53/0.12     & \textbf{1.00/1.00} & \textbf{1.00/1.00} & \textbf{1.00/1.00} & 0.89/0.74\\
RivaGAN                            & 1.00/0.98 & 0.94/0.69 & 0.96/0.76 & 0.97/0.88 & 0.95/0.79     & 1.00/0.98 & 0.99/0.98 & \textbf{1.00/1.00} & 0.98/0.88\\  
Stegastamp & \textbf{1.00/1.00} & \textbf{1.00/1.00}  & 0.99/0.88  & 0.91/0.89  & 1.00/0.99    & 1.00/0.97 & \textbf{1.00/1.00} & 1.00/0.96 & 0.99/0.96 \\
\hline
Tree-Ring Watermarks                           & \textbf{1.00/1.00} & 0.99/0.68 & 0.94/0.62 & \textbf{1.00/1.00} & 0.84/0.15   & \textbf{1.00/1.00} & \textbf{1.00/1.00} & \textbf{1.00/1.00} & 0.97/0.81\\
RingID                             & \textbf{1.00/1.00} & \textbf{1.00/1.00} & \textbf{1.00/1.00} & 0.98/0.86 & 1.00/0.98  & \textbf{1.00/1.00} & \textbf{1.00/1.00} & \textbf{1.00/1.00} & 1.00/0.98\\ 
Gaussian Shading & \textbf{1.00/1.00} & \textbf{1.00/1.00} & \textbf{1.00/1.00} & \textbf{1.00/1.00} & 0.99/0.96 & \textbf{1.00/1.00} & \textbf{1.00/1.00} & \textbf{1.00/1.00} & 1.00/0.99 \\
\hline
\textbf{Shallow Diffuse (ours)}                       & \textbf{1.00/1.00} & 1.00/0.99 & 1.00/0.99 & \textbf{1.00/1.00} & \textbf{1.00/1.00}    & \textbf{1.00/1.00} & \textbf{1.00/1.00} & \textbf{1.00/1.00} & \textbf{1.00/1.00} \\
\Xhline{3\arrayrulewidth}
\end{tabular}}
\label{tab:comparison_details_user_scenario_distortion}
\end{table}

\begin{table}[htbp]
\caption{\textbf{Watermarking Robustness for regeneration and adversarial attacks under the user scenario.}}

\begin{center}
\resizebox{\columnwidth}{!}{%
\begin{tabular}{l|ccccccc|ccc}
\Xhline{3\arrayrulewidth}
\multirow{2}{*}{Method} & \multicolumn{10}{c}{Watermarking Robustness (AUC $\uparrow$/TPR@1\%FPR$\uparrow$)} \\
\cline{2-11}
& DiffPure & IC1 & IC2 & IR & Rinse2x & Rinse4x & Regeneration Average & BA & GA & Adversarial Average\\
\textbf{COCO Dataset}    & \multicolumn{7}{c|}{}                    \\
\Xhline{3\arrayrulewidth}
DwtDct         & 0.46/0.00 & 0.49/0.00 & 0.49/0.01 & 0.46/0.00  & 0.61/0.00 & 0.65/0.01 & 0.53/0.00 &    0.97/0.80      &   0.96/0.84       & 0.97/0.82 \\
DwtDctSvd      & 0.50/0.01 & 0.70/0.05 & 0.64/0.04 & 0.68/0.07  & 0.72/0.08 & 0.69/0.08 & 0.66/0.06 & 0.79/0.00 & 0.49/0.00 & 0.64/0.00  \\
RivaGAN      & 0.63/0.02 & 0.68/0.05 & 0.66/0.04 & 0.75/0.15 & 0.75/0.04 & 0.68/0.03 & 0.69/0.05 & \textbf{1.00/1.00} & \textbf{1.00/1.00} & \textbf{1.00/1.00} \\
Stegastamp   & 0.81/0.27 & 1.00/0.95 & 1.00/0.95 & 0.85/0.28 & 0.78/0.23 & 0.69/0.16 & 0.86/0.47&   0.73/0.23    &      0.71/0.28     &    0.72/0.26       \\
\hline
Tree-Ring Watermarks  & \textbf{1.00/1.00} & \textbf{1.00/1.00} & \textbf{1.00/1.00} & \textbf{1.00/1.00} & 0.99/0.92 & 0.98/0.78 & 1.00/0.95 & 0.60/0.39  & 0.46/0.23 & 0.53/0.31 \\
RingID                & \textbf{1.00/1.00} & \textbf{1.00/1.00} & \textbf{1.00/1.00} & \textbf{1.00/1.00} & \textbf{1.00/1.00} & \textbf{1.00/1.00} & \textbf{1.00/1.00} & 0.75/0.59  & \textbf{1.00/1.00} & 0.88/0.79 \\
Gaussian Shading      & \textbf{1.00/1.00} & \textbf{1.00/1.00} & \textbf{1.00/1.00} & \textbf{1.00/1.00} & \textbf{1.00/1.00} & \textbf{1.00/1.00} & \textbf{1.00/1.00} & 0.53/0.48  & 0.52/0.47 & 0.53/0.47 \\
\hline
\textbf{Shallow Diffuse (ours)} & 0.99/0.86 & 1.00/0.99 & 0.99/0.97 & \textbf{1.00/1.00} & \textbf{1.00/1.00} & 1.00/0.93 & 1.00/0.96 & 0.70/0.62 & 0.70/0.62 & 0.70/0.62 \\
\multicolumn{1}{c|}{} & \multicolumn{7}{c|}{}\\
\textbf{DiffusionDB Dataset}  & \multicolumn{7}{c|}{}             \\
\Xhline{3\arrayrulewidth}
DwtDct                & 0.50/0.00 & 0.52/0.01 & 0.49/0.00 & 0.50/0.00 & 0.64/0.00 & 0.66/0.02 & 0.55/0.01 & 0.97/0.79 & 0.97/0.77 & 0.97/0.78 \\
DwtDctSvd             & 0.51/0.02 & 0.73/0.03 & 0.68/0.04 & 0.70/0.07 & 0.73/0.07 & 0.66/0.02 & 0.67/0.04 & 0.77/0.00 & 0.39/0.00 & 0.58/0.00 \\
RivaGAN               & 0.56/0.00 & 0.65/0.03 & 0.63/0.04 & 0.73/0.16 & 0.70/0.02 & 0.63/0.01 & 0.65/0.04 & 1.00/0.98 & 1.00/0.99 & \textbf{1.00/0.98} \\
Stegastamp            & 0.83/0.28 & 1.00/0.91 & 1.00/0.93 & 0.85/0.40 & 0.78/0.13 & 0.68/0.11 & 0.86/0.46 &    0.69/0.21      &   0.71/0.30        &     0.70/0.26        \\
\hline
Tree-Ring Watermarks  & 0.99/0.99 & 0.99/0.99 & 0.99/0.98 & 0.96/0.92 & 0.98/0.81 & 0.95/0.54 & 0.98/0.87 & 0.51/0.32 & 0.38/0.20 & 0.45/0.26 \\
RingID                & \textbf{1.00/1.00} & \textbf{1.00/1.00} & \textbf{1.00/1.00} & \textbf{1.00/1.00} & \textbf{1.00/1.00} & 1.00/0.99 &  \textbf{1.00/1.00} & 0.71/0.58 & \textbf{1.00/1.00} & 0.85/0.79 \\
Gaussian Shading      & 1.00/0.99 & 1.00/0.99 & \textbf{1.00/1.00} & \textbf{1.00/1.00} & \textbf{1.00/1.00} & \textbf{1.00/1.00} &  \textbf{1.00/1.00} & 0.50/0.46 & 0.50/0.46 & 0.50/0.46 \\
\hline
\textbf{Shallow Diffuse (ours)} & 0.96/0.90 & 0.96/0.92 & 0.97/0.93 & 0.98/0.96 & 1.00/0.98 & 0.98/0.88 & 0.97/0.93 & 0.66/0.58 & 0.68/0.60 &  0.67/0.59\\
\Xhline{3\arrayrulewidth}
\end{tabular}}
\end{center}
\label{tab:comparison_details_user_scenario_regeneration_adversarial}
\end{table}


\begin{figure}[t]
\centering\includegraphics[width=.6\linewidth]{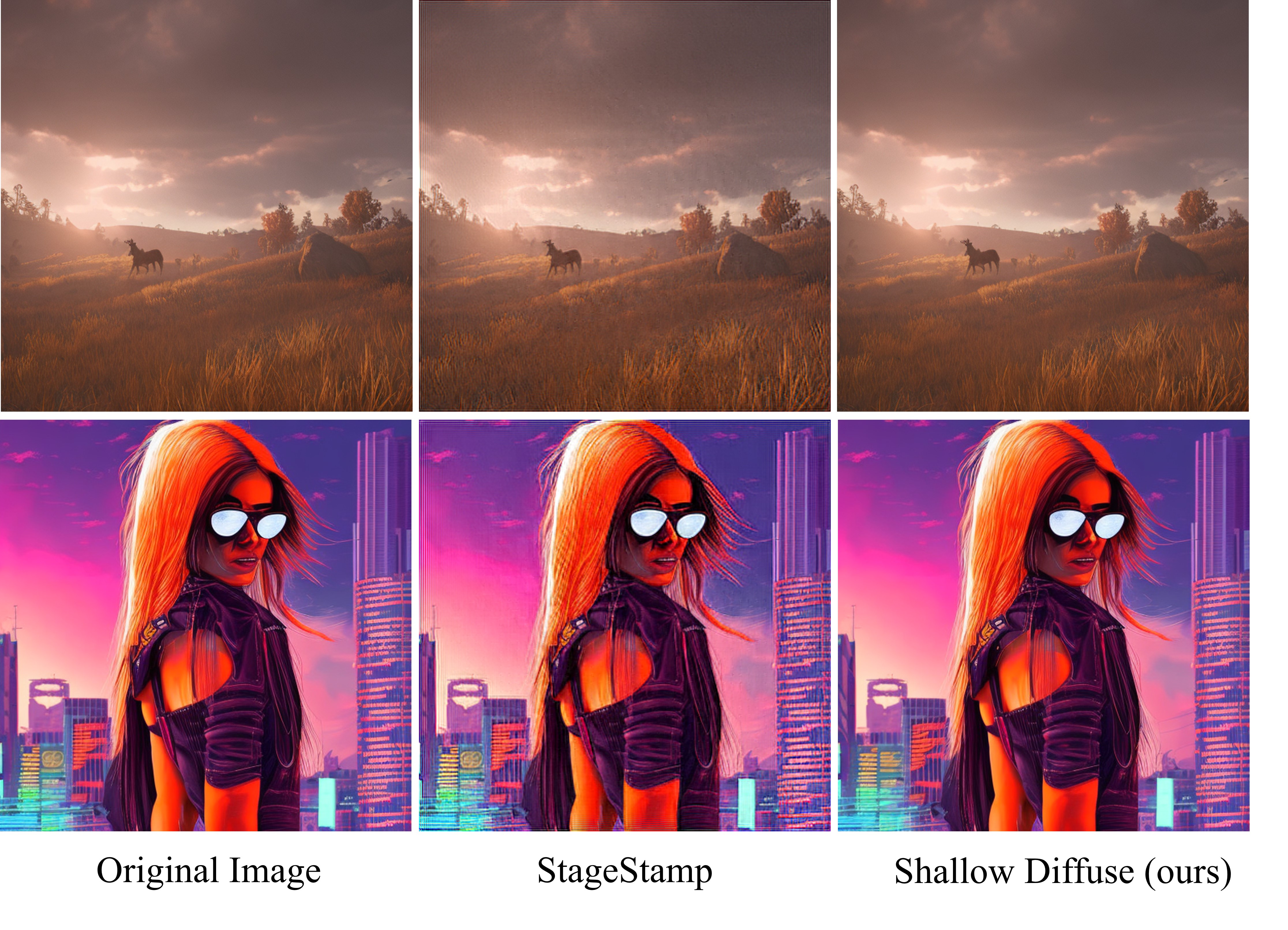}
    \caption{\textbf{Generation Consistency in server scenarios.} We compare the visualization quality of our method against the original image and StageStamp.}\label{fig:t2i_visual_consistency_stagestamp_shallowdiffuse}
\end{figure}

\subsection{Multi-key Watermarking}
\label{sec:multi_key}

In this section, we examine the capability of Shallow Diffuse to support multi-key watermarking. We evaluate the ability to embed multiple watermarks into the same image and detect each one independently. For this experiment, we test cases with 2, 4, 8, 16, 32 watermarks. Each watermark uses a unique ring-shaped key $\bm W_i$ and a non-overlapped mask $M$ (part of a circle). This is a non-trivial setting as we could pre-defined the key number and non-overlapped mask $M$ for application. The metric for this task is the average robustness across all keys, measured in terms of AUC and TPR@1\%FPR. For this study, we test the Tree-Ring and Shallow Diffuse in the server scenario.
The results of this experiment are presented in \Cref{tab:multi_key_rewatermark}. Shallow Diffuse consistently outperformed Tree-Ring in robustness across different numbers of users. Even as the number of users increased to 32, Shallow Diffuse maintained strong robustness under clean conditions. However, in adversarial settings, its robustness began to decline when the number of users exceeded 16. Under the current setup, when the number of users surpasses the predefined limit, our method becomes less robust and accurate.
We believe that enabling watermarking for hundreds or even thousands of users simultaneously is a challenging yet promising future direction for Shallow Diffuse.

\begin{table}[htbp]
\caption{\textbf{Multi-key re-watermark for different attacks under the server scenario.}}
\begin{center}
\resizebox{\columnwidth}{!}{%
\begin{tabular}{c|c|ccccccccccccc}
\Xhline{3\arrayrulewidth}
\multirow{2}{*}{Watermark numbder} &\multirow{2}{*}{Method} & \multicolumn{13}{c}{Watermarking Robustness (AUC $\uparrow$/TPR@1\%FPR$\uparrow$)}                                                                                                    \\ \cline{3-15} 
                    &          & \multicolumn{1}{c}{Clean} & \multicolumn{1}{c}{JPEG} & \multicolumn{1}{c}{G.Blur} & \multicolumn{1}{c}{G.Noise} & \multicolumn{1}{c}{CJ} & \multicolumn{1}{c}{RR} & \multicolumn{1}{c}{RD} & \multicolumn{1}{c}{M.Blur} & \multicolumn{1}{c}{DiffPure} & \multicolumn{1}{c}{IC1} & \multicolumn{1}{c}{IC2} & \multicolumn{1}{c}{IR} & \multicolumn{1}{c}{Average} \\
\hline
\multirow{2}{*}{2} 
 & Tree-Ring    & \textbf{1.00/1.00} & 0.99/0.84 & 1.00/0.97 & 0.95/0.83 & 0.98/0.75 & 1.00/1.00 & 1.00/1.00 & 1.00/1.00 & 0.91/0.23 & 1.00/0.91 & 0.98/0.82 & 0.94/0.49 & 0.98/0.80\\
 & Shallow Diffuse   & \textbf{1.00/1.00} & \textbf{1.00/1.00} & \textbf{1.00/1.00} & \textbf{0.98/0.95} & \textbf{1.00/0.90} & \textbf{1.00/1.00} & \textbf{1.00/1.00} & \textbf{1.00/1.00} & \textbf{0.98/0.65} & \textbf{1.00/0.91} & \textbf{1.00/0.97} & \textbf{1.00/0.99} & \textbf{0.99/0.95} \\
 \hline
 \multirow{2}{*}{4} 
 & Tree-Ring    & \textbf{1.00/1.00} & 0.98/0.63 & 1.00/0.89 & 0.96/0.86 & 0.90/0.54 & 1.00/0.92 & 1.00/0.99 & 1.00/0.95 & 0.88/0.11 & 0.99/0.72 & 0.97/0.67 & 0.92/0.37 & 0.96/0.70\\
 & Shallow Diffuse    & \textbf{1.00/1.00} & 1.00/0.96 & 0.99/0.88 & 0.97/0.91 & 0.99/0.82  & \textbf{1.00/1.00} & \textbf{1.00/1.00}  & \textbf{1.00/1.00} & 0.94/0.37 & 0.99/0.80  & 0.99/0.83  & 0.99/0.89 & 0.99/0.86\\
 \hline
 \multirow{2}{*}{8} 
 & Tree-Ring & 1.00/0.95  & 0.90/0.32 & 0.97/0.56 & 0.92/0.64 & 0.90/0.45 & 0.98/0.71 & 1.00/0.89  & 0.98/0.68  & 0.77/0.08  & 0.91/0.38  & 0.89/0.25  & 0.83/0.16 & 0.91/0.47\\
 & Shallow Diffuse    & \textbf{1.00/1.00} & 0.99/0.85 & 0.97/0.73 & 0.97/0.90 & 0.98/0.80  & 1.00/0.98 & \textbf{1.00/1.00}  & 1.00/0.96 & 0.91/0.36 & 0.98/0.71  & 0.97/0.70  & 0.99/0.80 & 0.98/0.80\\
 \hline
 \multirow{2}{*}{16} 
 & Tree-Ring    & 0.96/0.57 & 0.78/0.18 & 0.87/0.32 & 0.87/0.38 & 0.84/0.24 & 0.90/0.42 & 0.95/0.53 & 0.90/0.36 & 0.68/0.05 & 0.80/0.18 & 0.77/0.14 & 0.72/0.05 & 0.83/0.26 \\
 & Shallow Diffuse   & 1.00/0.89 & 0.94/0.59 & 0.89/0.39 & 0.94/0.73 & 0.92/0.53 & 0.97/0.73 & 0.99/0.84 & 0.96/0.73 & 0.78/0.11 & 0.90/0.46  & 0.91/0.46 & 0.92/0.55 & 0.92/0.56\\
 \hline
 \multirow{2}{*}{32} 
 & Tree-Ring    & 0.95/0.44 & 0.77/0.11 & 0.85/0.15 & 0.86/0.31 & 0.80/0.15 & 0.88/0.22 & 0.94/0.34 & 0.89/0.26 & 0.63/0.03 & 0.78/0.11 & 0.75/0.08 & 0.70/0.05 & 0.80/0.16 \\
 & Shallow Diffuse    & 0.99/0.89 &0.91/0.46 &0.86/0.26 &0.93/0.63  &0.91/0.47 &0.96/0.65  &0.99/0.84 &0.95/0.59  &0.74/0.07  &0.87/0.31  &0.87/0.30 &0.89/0.28  & 0.90/0.44\\
\Xhline{3\arrayrulewidth}               
\end{tabular}}
\end{center}
\label{tab:multi_key_rewatermark}
\end{table}

\subsection{Ablation Study of Different Watermark Patterns}

\label{sec:ablation_watermark_pattern}

In \Cref{tab:ablation_watermark_pattern}, we examine various combinations of watermark patterns $\bm M \odot \bm W$. For the shape of the mask $\bm M$, "Circle" refers to a circular mask $\bm M$ (see \Cref{fig:watermark_pattern} top left), while "Ring" represents a ring-shaped $\bm M$. Since the mask is centered in the middle of the figure, "Low" and "High" denote frequency regions: "Low" represents a DFT with zero-frequency centering, whereas "High" indicates a DFT without zero-frequency centering, as illustrated in \Cref{fig:watermark_pattern} bottom. For the distribution of $\bm W$, "Zero" implies all values are zero, "Rand" denotes values sampled from $\mathcal{N}(\bm 0, \bm 1)$, and "Rotational Rand" represents multiple concentric rings in $\bm W$, with each ring's values sampled from $\mathcal{N}(\bm 0, \bm 1)$.

As shown in \Cref{tab:ablation_watermark_pattern}, watermarking in high-frequency regions (Rows 7-9) yields improved image consistency compared to low-frequency regions (Rows 1-6). Additionally, the "Circle" $\bm M$ combined with "Rotational Rand" $\bm W$ (Rows 3 and 9) demonstrates greater robustness than other watermark patterns. Consequently, Shallow Diffuse employs the "Circle" $\bm M$ with "Rotational Rand" $\bm W$ in the high-frequency region.

\begin{table}[htbp]
\caption{\textbf{Ablation study on different watermark patterns.}}
\centering
\resizebox{\columnwidth}{!}{%
\begin{tabular}{ccccccc}
\Xhline{3\arrayrulewidth}
\multicolumn{3}{c}{Method \& Dataset}   & \multicolumn{1}{c}{\multirow{2}{*}{PSNR $\uparrow$}} & \multicolumn{1}{c}{\multirow{2}{*}{SSIM $\uparrow$}} & \multicolumn{1}{c}{\multirow{2}{*}{LPIPS $\downarrow$}} & \multicolumn{1}{c}{\multirow{2}{*}{\makecell{Average Watermarking Robustness \\ (AUC $\uparrow$/TPR@1\%FPR$\uparrow$)}}} \\ \cline{1-3}
Frequency Region & Shape & Distribution & \multicolumn{1}{c}{}        & \multicolumn{1}{c}{}                               & \multicolumn{1}{c}{}                                  & \multicolumn{1}{c}{}                       \\
\midrule
Low & Circle & Zero &29.10 &0.90 &0.06 & 0.93/0.65\\
Low & Circle & Rand &29.37 &0.92 &0.05 &0.92/0.25 \\
Low & Circle & Rotational Rand &29.13 &0.90 &0.06 &\textbf{1.00/1.00} \\
Low & Ring & Zero &36.20 &0.95 &0.02 & 0.78/0.35\\
Low & Ring & Rand &38.23 &0.97 &0.01 &0.87/0.49 \\
Low & Ring & Rotational Rand &35.23 &0.93 &0.02 &0.99/0.98 \\
High & Circle & Zero &38.3 &0.96 &0.01 & 0.80/0.34\\
High & Circle & Rand &\textbf{42.3} & \textbf{0.98} & \textbf{0.004} & 0.86/0.35 \\
High & Circle & Rotational Rand & 38.0 & 0.94&0.01 &\textbf{1.00/1.00}\\
\Xhline{3\arrayrulewidth}
\end{tabular}
}
\label{tab:ablation_watermark_pattern}
\end{table}

\subsection{Ablation Study of Watermarking Embedded Channel.} 
\label{sec:ablation_watermarking_embedded_channel}

As shown in \Cref{tab:ablation_embeded_channel}, we evaluate specific embedding channels $c$ for Shallow Diffuse, where "0," "1," "2," and "3" denote $c = 0, 1, 2, 3$, respectively, and "0 + 1 + 2 + 3" indicates watermarking applied across all channels \footnote{Here we apply Shallow Diffuse on the latent space of Stable Diffusion, the channel dimension is 4.}. Since applying watermarking to any single channel yields similar results (Row 1-4), but applying it to all channels (Row 5) negatively impacts image consistency and robustness, we set $c = 3$ for Shallow Diffuse. The reason is that many image processing operations tend to affect all channels uniformly, making watermarking across all channels more susceptible to such attacks.)

\begin{table}[htp]
\caption{\textbf{Ablation study on watermarking embedded channel.}}
\centering
\resizebox{\columnwidth}{!}{%
\begin{tabular}{ccccccccc}
\Xhline{3\arrayrulewidth}
\multirow{2}{*}{Watermark embedding channel} & \multicolumn{1}{c}{\multirow{2}{*}{PSNR $\uparrow$}} & \multicolumn{1}{c}{\multirow{2}{*}{SSIM $\uparrow$}} & \multicolumn{1}{c}{\multirow{2}{*}{LPIPS $\downarrow$}} & \multicolumn{5}{c}{Watermarking Robustness (TPR@1\%FPR$\uparrow$)}                                                                                                    \\ \cline{5-9} 
  & \multicolumn{1}{c}{}  & \multicolumn{1}{c}{}  & \multicolumn{1}{c}{}  & \multicolumn{1}{c}{Clean} & \multicolumn{1}{c}{JPEG} & \multicolumn{1}{c}{G.Blur} & \multicolumn{1}{c}{G.Noise} & \multicolumn{1}{c}{Color Jitter} \\
  \midrule
0                               & 36.46 & \textbf{0.93} & \textbf{0.02}  & \textbf{1.00} & \textbf{1.00} & \textbf{1.00} & \textbf{1.00} & 0.99  \\
1                               & 36.57 & \textbf{0.93} & \textbf{0.02}  & \textbf{1.00} & \textbf{1.00} & \textbf{1.00} & \textbf{1.00} & 0.99  \\
2                               & 36.13 & 0.92 & \textbf{0.02}  & \textbf{1.00} & \textbf{1.00} & \textbf{1.00} & \textbf{1.00} & \textbf{1.00}  \\
3                               & \textbf{36.64} & \textbf{0.93} & \textbf{0.02}  & \textbf{1.00} & \textbf{1.00} & \textbf{1.00} & \textbf{1.00} & \textbf{1.00}  \\
0 + 1 + 2 + 3                              & 33.19 & 0.83 & 0.05  & \textbf{1.00} & \textbf{1.00} & \textbf{1.00} & \textbf{1.00} & 0.95  \\
\Xhline{3\arrayrulewidth}
\end{tabular}}
\label{tab:ablation_embeded_channel}
\end{table}

\subsection{Ablation Study of Inference Steps}
\label{sec:ablation_inference_steps}

We conducted ablation studies on the number of sampling steps, across 10, 25, and 50 steps. The results, shown in \Cref{tab:ablation_inference_steps}, indicate that Shallow Diffuse is not highly sensitive to sampling steps. The watermark robustness remains consistent across all sampling steps.

\begin{table}[htbp]
\caption{\textbf{Ablation study over inference steps.}}
\begin{center}
\resizebox{\columnwidth}{!}{%
\begin{tabular}{l|cccccccccccccc}
\Xhline{3\arrayrulewidth}
\multirow{2}{*}{Steps } & \multicolumn{14}{c}{Watermarking Robustness (AUC $\uparrow$/TPR@1\%FPR$\uparrow$)}                                                                                                    \\ \cline{2-15} 
                              & \multicolumn{1}{c}{Clean}  & \multicolumn{1}{c}{G.Noise} & \multicolumn{1}{c}{CJ} & \multicolumn{1}{c}{RD} & \multicolumn{1}{c}{M.Blur} & \multicolumn{1}{c}{DiffPure} & \multicolumn{1}{c}{IC1} & \multicolumn{1}{c}{IC2}  & \multicolumn{1}{c}{DiffDeeper} & \multicolumn{1}{c}{Rinse2x} & \multicolumn{1}{c}{Rinse4x}  & \multicolumn{1}{c}{BA} & \multicolumn{1}{c}{GA} &  \multicolumn{1}{c}{Average} \\
\hline
10 &\textbf{1.00/1.00} & 0.99/0.89 & 0.95/0.76 & \textbf{1.00/1.00} & \textbf{1.00/1.00} & \textbf{1.00/1.00} &0.99/0.93 & 0.99/0.93 & 1.00/0.99 & \textbf{1.00/1.00} & \textbf{1.00/0.98} & \textbf{0.63/0.49} & \textbf{0.74/0.70} & 0.95/0.90\\
25   &\textbf{1.00/1.00} & 1.00/0.97 &  \textbf{1.00/1.00}  & \textbf{1.00/1.00} & \textbf{1.00/1.00} & \textbf{1.00/1.00} & 0.99/0.91 & 0.99/0.92 & \textbf{1.00/1.00}  & \textbf{1.00/1.00}  & 1.00/0.92 & 0.56/0.48 & 0.73/0.65 & 0.94/0.91\\
50   & \textbf{1.00/1.00}   &\textbf{1.00/1.00}  &\textbf{1.00/1.00}     & \textbf{1.00/1.00}&\textbf{1.00/1.00} &\textbf{1.00/1.00} &\textbf{1.00/1.00} &\textbf{1.00/1.00} & 1.00/0.99 & 1.00/1.00 & 0.99/0.90 & 0.57/0.45 & 0.70/0.63 &  \textbf{0.94/0.92}\\ 
\Xhline{3\arrayrulewidth}               
\end{tabular}}
\end{center}
\label{tab:ablation_inference_steps}
\end{table}

\section{Robustness Analysis on Geometric Distortions}
To further analyze robustness under geometric transformations, we conducted an extended study focusing on rotation, cropping, and scaling. 
\paragraph{Controllable Trade-off via Mask Radius.}
Our framework enables explicit control over the balance between perceptual quality and geometric robustness by adjusting the frequency mask radius $r$. 
We compared the original configuration (optimized for visual fidelity) to a more robust configuration with an expanded radius ($r\!=\!3$–$13$). 
Results in Table~\ref{tab:geo_tradeoff} show that increasing the radius improves geometric robustness---particularly rotation and cropping---while incurring only a mild degradation in image consistency.

\begin{table}[htp]
\caption{\textbf{Trade-off between image fidelity and geometric robustness.} Increasing the mask radius ($r\!=\!3$–$13$) enhances rotation and cropping robustness with minor PSNR drop.}
\centering
\resizebox{\columnwidth}{!}{%
\begin{tabular}{cccccccc}
\Xhline{3\arrayrulewidth}
\multirow{2}{*}{Setting} & \multirow{2}{*}{CLIP} & \multirow{2}{*}{PSNR $\uparrow$} & \multirow{2}{*}{SSIM $\uparrow$} & \multirow{2}{*}{LPIPS $\downarrow$} & \multicolumn{3}{c}{Watermarking Robustness (AUC $\uparrow$)} \\ \cline{6-8} 
 &  &  &  &  & Rotation & Cropping & Scaling \\
 \midrule
Current ($r\!=\!0$–$10$) & 0.3669 & 35.49 & 0.96 & 0.02 & 0.68 & 0.56 & 1.00 \\
Robust ($r\!=\!3$–$13$) & 0.3637 & 32.05 & 0.95 & 0.03 & 0.90 & 0.89 & 1.00 \\
\Xhline{3\arrayrulewidth}
\end{tabular}
}
\label{tab:geo_tradeoff}
\end{table}

\section{Generalization to Transformer-Based Diffusion Models}
\label{app:flux_generalization}
To assess whether Shallow Diffuse generalizes beyond U-Net based diffusion architectures, we conducted an additional study on FLUX \cite{flux2024}, a transformer-based diffusion model that employs a Flow Matching noise scheduler.
\paragraph{Experimental Setup.}
All evaluations were performed under a server-side watermarking scenario at $512 \times 512$ resolution. 
The same watermark design as in the Stable Diffusion experiments was used, with two key modifications to account for architectural differences: (a) the watermark radius was set to 5, and (b) watermark injection was applied across all latent channels. 

We generated $100$ watermarked images and evaluated both consistency and robustness across injection timesteps $\{0.1T, 0.2T, \dots, 0.9T\}$. The results are presented in Table~\ref{tab:flux_generalization}.

\begin{table}[htp]
\caption{\textbf{Generalization of Shallow Diffuse to the FLUX transformer-based diffusion model.}}
\centering
\begin{tabular}{ccccccc}
\Xhline{3\arrayrulewidth}
\multirow{2}{*}{Timestep ($t/T$)} & \multirow{2}{*}{PSNR $\uparrow$} & \multirow{2}{*}{SSIM $\uparrow$} & \multirow{2}{*}{LPIPS $\downarrow$} & \multicolumn{3}{c}{Watermarking Robustness} \\ \cline{5-7} 
 &  &  &  & AUC $\uparrow$ & ACC $\uparrow$ & TPR@1\%FPR $\uparrow$ \\
\midrule
0.1 & 31.27 & 0.91 & 0.05 & 0.90 & 0.93 & \textbf{0.87} \\
0.2 & 31.66 & 0.92 & \textbf{0.04} & 0.91 & 0.93 & \textbf{0.87} \\
0.3 & 31.68 & 0.92 & 0.05 & 0.92 & \textbf{0.94} & \textbf{0.87} \\
0.4 & \textbf{31.69} & \textbf{0.93} & \textbf{0.04} & 0.93 & \textbf{0.94} & 0.86 \\
0.5 & 31.68 & \textbf{0.93} & \textbf{0.04} & \textbf{0.94} & 0.93 & 0.86 \\
0.6 & 31.56 & \textbf{0.93} & \textbf{0.04} & \textbf{0.94} & \textbf{0.94} & 0.85 \\
0.7 & 31.50 & \textbf{0.93} & \textbf{0.04} & \textbf{0.94} & 0.93 & 0.81 \\
0.8 & 31.52 & \textbf{0.93} & 0.05 & \textbf{0.94} & \textbf{0.94} & 0.82 \\
0.9 & 31.62 & 0.92 & 0.06 & \textbf{0.94} & \textbf{0.94} & 0.81 \\
\Xhline{3\arrayrulewidth}
\end{tabular}
\label{tab:flux_generalization}
\end{table}

\paragraph{Analysis.}
To ensure a fair comparison of injection timesteps across different schedulers,
we matched the effective Signal-to-Noise Ratio (SNR) between the
Variance Preserving (VP) schedule used in Stable Diffusion and the Flow Matching scheduler in FLUX.
A timestep of $t/T = 0.3$ in VP approximately corresponds to $t/T = 0.205$ in Flow Matching when equalizing SNR.

The results indicate that injecting at this equivalent ``shallow'' timestep achieves 
the best SSIM (0.93) and near-optimal PSNR (31.7), while also maximizing robustness (AUC 0.94, TPR@1\%FPR 0.87). 
This confirms that the optimal embedding region discovered for Stable Diffusion generalizes 
to transformer-based architectures, underscoring the broad applicability of our null-space embedding framework.
\section{Comparison with ROBIN}

We conduct a direct empirical comparison between our optimization-free Shallow Diffuse and the optimization-based ROBIN \cite{huang2024robin} under the server scenario with 1000 generations. Experiment results are shown in \Cref{tab:compare_with_robin_consistency} and \Cref{tab:compare_with_robin_robustness}. 

Our experiments show that Shallow Diffuse produces images with significantly higher perceptual quality and consistency. As shown in \Cref{tab:compare_with_robin_consistency}, our method achieves a PSNR nearly 11 dB higher and a better FID score. We attribute the difference to the frequency domains where watermarks are added: Shallow Diffuse uses the high-frequency domain, while ROBIN uses the low-frequency domain. Adding the watermark to high frequencies preserves the low-frequency content of the generated image, thereby significantly improving consistency and quality.

In terms of robustness, the two methods are competitive, each with distinct strengths. \Cref{tab:compare_with_robin_robustness} shows that both methods are highly robust to most distortion and regeneration attacks. ROBIN demonstrates superior robustness against geometric attacks like rotation (1.0 vs 0.69) and cropping (0.99 vs 0.58), as well as adversarial attacks.

This empirical comparison quantifies the fundamental trade-off. ROBIN's optimization process achieves higher robustness for challenging attacks at the cost of significantly lower image quality, longer setup times, and less flexibility. Shallow Diffuse provides a more balanced and practical solution, offering state-of-the-art image quality and comparable robustness across a wide range of common attacks, all within an efficient, optimization-free framework adaptable to both server and user needs.

\begin{table}[htp]
\caption{\textbf{Consistency between Shallow Diffuse and ROBIN under the server scenario.}}
\centering
\begin{tabular}{cccccc}
\Xhline{3\arrayrulewidth}
Method & PSNR $\uparrow$ & SSIM $\uparrow$ & LPIPS $\downarrow$ & FID $\downarrow$  & CLIP $\uparrow$   \\ 
\midrule
ROBIN & 24.614 & 0.8261 & 0.1087 & 134.8& 0.366 \\
Shallow Diffuse & \textbf{35.49} & \textbf{0.96} & \textbf{0.05} & \textbf{129.228} & \textbf{0.367} \\
\Xhline{3\arrayrulewidth}
\end{tabular}
\label{tab:compare_with_robin_consistency}
\end{table}

\begin{table}[htp]
\caption{\textbf{Robustness between Shallow Diffuse and ROBIN under the server scenario.}}
\centering
\resizebox{\columnwidth}{!}{%
\begin{tabular}{cccccccccccccccc}
\Xhline{3\arrayrulewidth}
\multirow{2}{*}{Method} & \multicolumn{15}{c}{Watermarking Robustness (AUC) $\uparrow$} \\ \cline{2-16} 
 &  JPEG & G.Blur & G.Noise & CJ & RR & RD & M.Blur& Rotation & Crop & IC1 & IC2 & IR & Rinse4x & BA & GA \\
\midrule
ROBIN & \textbf{0.999} & \textbf{0.999} & 0.963 & 0.962 & \textbf{1} & \textbf{1} & \textbf{1}	& \textbf{1} & \textbf{0.991} & 0.998 & \textbf{1} & \textbf{1} & \textbf{1} & \textbf{0.939} & \textbf{0.9} \\
Shallow Diffuse & \textbf{0.999} & \textbf{0.999} & \textbf{0.997} & \textbf{0.967} & \textbf{1} & \textbf{1} & \textbf{1} & 0.691 & 0.582 & \textbf{1} & 0.998 & 0.993 & 0.999 & 0.67 & 0.78
 \\
\Xhline{3\arrayrulewidth}
\end{tabular}
}
\label{tab:compare_with_robin_robustness}
\end{table}
\section{Proofs in Section~\ref{sec:analysis}}\label{app:proof}
\subsection{Proofs of Theorem~\ref{thm:consistency}}

\begin{proof}[Proof of \Cref{thm:consistency}]

According to \Cref{assump:1}, we have $|| \hat{\bm x}_{0, t}^{\mathcal{W}} - \hat{\bm x}_{0, t}||^2_2 = \lambda||\bm J_{\bm \theta, t}(\bm x_t) \cdot \Delta \bm x||_2^2$. From Levy's Lemma proposed in \cite{levy}, given function $||\bm J_{\bm \theta, t}(\bm x_t) \cdot \Delta \bm x||_2^2 : \mathbb{S}^{d- 1} \rightarrow \mathbb{R}$ we have:

\begin{equation*}
    \mathbb{P}\left(\left|||\bm J_{\bm \theta, t}(\bm x_t) \cdot \Delta \bm x||_2^2 - \mathbb{E}\left[||\bm J_{\bm \theta, t}(\bm x_t) \cdot \Delta \bm x||_2^2 \right]\right| \geq \epsilon \right) \leq 2 \exp \left(\frac{-C (d - 2) \epsilon^2}{L^2}\right),
\end{equation*}

given $L$ to be the Lipschitz constant of $||\bm J_{\bm \theta, t}(\bm x_t)||_2^2$ and $C$ is a positive constant (which can be taken to be $C = (18 \pi^3)^{-1}$). From \Cref{lemma:2} and \Cref{lemma:3}, we have:

\begin{equation*}
    \mathbb{P}\left(\left|||\bm J_{\bm \theta, t}(\bm x_t) \cdot \Delta \bm x||_2^2 - \frac{||\bm J_{\bm \theta, t}(\bm x_t)||^2_F}{d}\right| \geq \epsilon \right) \leq 2 \exp \left(\frac{-(18 \pi^3)^{-1} (d - 2) \epsilon^2}{||\bm J_{\bm \theta, t}(\bm x_t)||^4_2}\right).
\end{equation*}

Define $\dfrac{1}{r_t}$ as the desired probability level, set $$\frac{1}{r_t} = 2 \exp \left(\dfrac{-(18 \pi^3)^{-1} (d - 2) \epsilon^2}{||\bm J_{\bm \theta, t}(\bm x_t)||^4_2}\right),$$
Solving for $\epsilon$:
$$\epsilon = ||\bm J_{\bm \theta, t}(\bm x_t)||^2_2 \sqrt{\frac{18 \pi^3}{d-2}\log\left(2r_t\right)}.$$

Therefore, with probability $1 - \dfrac{1}{r_t}$, we have:
\begin{align*}
    || \hat{\bm x}_{0, t}^{\mathcal{W}} - \hat{\bm x}_{0, t}||^2_2 
    &= \lambda^2 ||\bm J_{\bm \theta, t}(\bm x_t) \cdot \Delta \bm x||_2^2, \\
    &\leq \frac{\lambda^2||\bm J_{\bm \theta, t}(\bm x_t)||^2_F}{d} + \lambda^2||\bm J_{\bm \theta, t}(\bm x_t)||^2_2 \sqrt{\frac{18 \pi^3}{d-2}\log\left(2 r_t\right)},\\
    &\leq \lambda^2||\bm J_{\bm \theta, t}(\bm x_t)||^2_2 \left(\frac{r_t}{d} + \sqrt{\frac{18 \pi^3}{d-2}\log\left(2 r_t\right)}\right), \\
    &= \lambda^2 L^2 \left(\frac{r_t}{d} + \sqrt{\frac{18 \pi^3}{d-2}\log\left(2 r_t\right)}\right),
\end{align*}

where the last inequality is obtained from $||\bm J_{\bm \theta, t}(\bm x_t)||^2_F \leq r_t ||\bm J_{\bm \theta, t}(\bm x_t)||^2_2$. Therefore, with probability $1 - \dfrac{1}{r_t}$, 
\begin{equation*}
    || \hat{\bm x}_{0, t}^{\mathcal{W}} - \hat{\bm x}_{0, t}||_2 \leq \lambda L \sqrt{\frac{r_t}{d} + \sqrt{\frac{18 \pi^3}{d-2}\log\left(2 r_t \right)}} = \lambda L h(r_t).
\end{equation*}

\end{proof}

\begin{proof}[Proof of \Cref{thm:detectability}]
    According to \Cref{eq:ddim}, one step of DDIM sampling at timestep $t$ could be represented by PMP $\bm f_{\bm \theta, t}(\bm x_t)$ as:
    \begin{align}
        \bm x_{t - 1} &= \sqrt{\alpha_{t-1}}\bm f_{\bm \theta, t}(\bm x_t) + \sqrt{1 - \alpha_{t - 1}}\left(\frac{\bm x_t - \sqrt{\alpha_t} \bm f_{\bm \theta, t}(\bm x_t)}{\sqrt{1 - \alpha_{t}}}\right), \\
        \label{eq:ddim_pmp}
        &= \sqrt{\frac{1 - \alpha_{t - 1}}{1 - \alpha_{t}}} \bm x_t + \frac{\sqrt{1 - \alpha_{t}}\sqrt{\alpha_{t-1}} - \sqrt{1 - \alpha_{t-1}}\sqrt{\alpha_t}}{\sqrt{1 - \alpha_t}} \bm f_{\bm \theta, t}(\bm x_t),
    \end{align}

    If we inject a watermark $\lambda \Delta \bm x$ to $\bm x_t$, so $x^{\mathscr{W}}_t = \bm x_t + \lambda \Delta \bm x$. To solve $x^{\mathscr{W}}_{t-1}$, we could plugging \Cref{eqn:linear-approx} to \Cref{eq:ddim_pmp}, we could obtain:
    \begin{align}
        \label{eq:ddim_pmp_approx}
        \bm x^{\mathscr{W}}_{t - 1} 
        &= \sqrt{\frac{1 - \alpha_{t - 1}}{1 - \alpha_{t}}} \bm x^{\mathscr{W}}_t + \frac{\sqrt{1 - \alpha_{t}}\sqrt{\alpha_{t-1}} - \sqrt{1 - \alpha_{t-1}}\sqrt{\alpha_t}}{\sqrt{1 - \alpha_t}} \bm f_{\bm \theta, t}(\bm x^{\mathscr{W}}_t), \\
        &= \bm x_{t-1} + \sqrt{\frac{1 - \alpha_{t - 1}}{1 - \alpha_{t}}} \lambda \Delta \bm x + \frac{\sqrt{1 - \alpha_{t}}\sqrt{\alpha_{t-1}} - \sqrt{1 - \alpha_{t-1}} \sqrt{\alpha_t}}{\sqrt{1 - \alpha_t}} \bm J_{\bm \theta, t}(\bm x_t)\Delta \bm x \\
        &= \bm x_{t-1} +  \lambda \underbrace{\left(\sqrt{\frac{1 - \alpha_{t - 1}}{1 - \alpha_{t}}} \bm I + \frac{\sqrt{1 - \alpha_{t}}\sqrt{\alpha_{t-1}} - \sqrt{1 - \alpha_{t-1}} \sqrt{\alpha_t}}{\sqrt{1 - \alpha_t}} \bm J_{\bm \theta, t}(\bm x_t)\right)}_{\coloneqq \bm W_t} \Delta \bm x,
    \end{align}  

    One step DDIM Inverse sampling at timestep $t - 1$ could be represented by PMP $\bm f_{\bm \theta, t}(\bm x_t)$ as: 
    \begin{align}
        \bm x_{t} 
        &= \sqrt{\frac{1 - \alpha_{t}}{1 - \alpha_{t - 1}}} \bm x_{t-1} + \frac{\sqrt{1 - \alpha_{t-1}}\sqrt{\alpha_{t}} - \sqrt{1 - \alpha_{t}}\sqrt{\alpha_{t-1}}}{\sqrt{1 - \alpha_{t-1}}} \bm f_{\bm \theta, t-1}(\bm x_{t-1}),
    \end{align}

    To detect the watermark, we apply one step DDIM Inverse on $\bm x^{\mathscr{W}}_{t-1}$ at timestep $t - 1$ to obtain $\bm \Tilde{x}^{\mathscr{W}}_{t}$:

    \begin{align*}
        \bm \Tilde{x}^{\mathscr{W}}_{t} &= \sqrt{\frac{1 - \alpha_{t}}{1 - \alpha_{t - 1}}} \bm x^{\mathscr{W}}_{t - 1} + \frac{\sqrt{1 - \alpha_{t-1}}\sqrt{\alpha_{t}} - \sqrt{1 - \alpha_{t}}\sqrt{\alpha_{t-1}}}{\sqrt{1 - \alpha_{t-1}}} \bm f_{\bm \theta, t-1}(\bm x^{\mathscr{W}}_{t - 1}), \\
        &= \bm x_t + \lambda \underbrace{\left(\sqrt{\frac{1 - \alpha_{t}}{1 - \alpha_{t-1}}} \bm I + \frac{\sqrt{1 - \alpha_{t-1}}\sqrt{\alpha_{t}} - \sqrt{1 - \alpha_{t}} \sqrt{\alpha_{t-1}}}{\sqrt{1 - \alpha_{t-1}}} \bm J_{\bm \theta, t-1}(\bm x_{t-1})\right)}_{\coloneqq \bm W_{t-1}} \bm W_{t} \Delta \bm x, \\
        &= \bm x_t + \lambda \bm W_{t-1} \bm W_{t} \Delta \bm x = \bm x^{\mathscr{W}}_t + \lambda \left(\bm W_{t-1} \bm W_{t} - \bm I \right) \Delta \bm x.
    \end{align*}

    Therefore:
    \begin{align*}
        ||\bm \Tilde{x}^{\mathscr{W}}_{t} - \bm x^{\mathscr{W}}_t||_2 
        &= \lambda ||\left(\bm W_{t-1} \bm W_{t} - \bm I \right) \Delta \bm x||_2, \\
        &= \lambda || \frac{\sqrt{1 - \alpha_{t-1}}\sqrt{\alpha_{t}} - \sqrt{1 - \alpha_{t}} \sqrt{\alpha_{t-1}}}{\sqrt{1 - \alpha_{t}}} \bm J_{\bm \theta, t-1}(\bm x_{t-1})\Delta \bm x, \\
        &\ \ \ \ \ \ \    + \frac{\sqrt{1 - \alpha_{t}}\sqrt{\alpha_{t-1}} - \sqrt{1 - \alpha_{t-1}} \sqrt{\alpha_t}}{\sqrt{1 - \alpha_{t-1}}} \bm J_{\bm \theta, t}(\bm x_t) \Delta \bm x,\\
        &\ \ \ \ \ \ \   - \frac{\left(\sqrt{1 - \alpha_{t}}\sqrt{\alpha_{t-1}} - \sqrt{1 - \alpha_{t-1}} \sqrt{\alpha_t}\right)^2}{\sqrt{1 - \alpha_{t-1}} \sqrt{1 - \alpha_{t}}} \bm J_{\bm \theta, t-1}(\bm x_{t-1}) \bm J_{\bm \theta, t}(\bm x_t) \Delta \bm x||_2, \\ 
        &\leq -\lambda g\left({\alpha_{t}, \alpha_{t-1}}\right)|| \bm J_{\bm \theta, t-1}(\bm x_{t-1})\Delta \bm x||_2 + \lambda g\left({\alpha_{t-1}, \alpha_{t}}\right)||\bm J_{\bm \theta, t}(\bm x_t) \Delta \bm x||_2\\
        &\ \ \ \ \ \ \   -\lambda g\left({\alpha_{t-1}, \alpha_{t}}\right) g\left({\alpha_{t}, \alpha_{t-1}}\right)||\bm J_{\bm \theta, t-1}(\bm x_{t-1}) \bm J_{\bm \theta, t}(\bm x_t) \Delta \bm x||_2, \\ 
        &\leq  -\lambda g\left({\alpha_{t}, \alpha_{t-1}}\right)|| \bm J_{\bm \theta, t-1}(\bm x_{t-1})\Delta \bm x||_2 \\
        &\ \ \ \ \ \ \ + \lambda g\left({\alpha_{t-1}, \alpha_{t}}\right)\left(1 -g\left({\alpha_{t}, \alpha_{t-1}}\right)L\right)||\bm J_{\bm \theta, t}(\bm x_t) \Delta \bm x||_2,\\
        &=  - g\left({\alpha_{t}, \alpha_{t-1}}\right)|| \hat{\bm x}_{0, t-1}^{\mathcal{W}} - \hat{\bm x}_{0, t-1}||_2 \\
        &\ \ \ \ \ \ \ + g\left({\alpha_{t-1}, \alpha_{t}}\right)\left(1 -g\left({\alpha_{t}, \alpha_{t-1}}\right)L\right)|| \hat{\bm x}_{0, t}^{\mathcal{W}} - \hat{\bm x}_{0, t}||_2,\\ 
    \end{align*}

    The first inequality holds because $g\left({\alpha_{t-1}, \alpha_{t}}\right) < 0$ and $g\left({\alpha_{t}, \alpha_{t-1}}\right) > 0$. The second inequality holds because $||\bm J_{\bm \theta, t-1}(\bm x_{t-1}) \bm J_{\bm \theta, t}(\bm x_t) \Delta \bm x||_2 \leq ||\bm J_{\bm \theta, t-1}(\bm x_{t-1})||_2 ||\bm J_{\bm \theta, t}(\bm x_t) \Delta \bm x||_2 \leq L ||\bm J_{\bm \theta, t}(\bm x_t) \Delta \bm x||_2$. From \Cref{thm:consistency}, with probability $1 - \dfrac{1}{r_{t-1}}$, 
    \begin{equation*}
        || \hat{\bm x}_{0, t-1}^{\mathcal{W}} - \hat{\bm x}_{0, t-1}||_2  \leq \lambda L h(r_{t-1}),
    \end{equation*}
    with probability $1 - \dfrac{1}{r_{t}}$, 
    \begin{equation*}
        || \hat{\bm x}_{0, t}^{\mathcal{W}} - \hat{\bm x}_{0, t}||_2 \leq \lambda L h(r_t),
    \end{equation*}
    Thus, from the union of bound, with a probability at least $1 - \dfrac{1}{r_{t}} - \dfrac{1}{r_{t-1}}$,
    \begin{align*}
        ||\bm \Tilde{x}^{\mathscr{W}}_{t} - \bm x^{\mathscr{W}}_t||_2 &\leq -\lambda L g\left({\alpha_{t}, \alpha_{t-1}}\right) h(r_{t-1}) +  \lambda L g\left({\alpha_{t-1}, \alpha_{t}}\right)\left(1 -g\left({\alpha_{t}, \alpha_{t-1}}\right)L\right) h(r_{t}) \\
        &\leq \lambda L \left(- g\left({\alpha_{t}, \alpha_{t-1}}\right) +  g\left({\alpha_{t-1}, \alpha_{t}}\right)\left(1 -L g\left({\alpha_{t}, \alpha_{t-1}}\right)\right)  \right) h(\max\{r_{t-1}, r_t\})
    \end{align*}
\end{proof}
\section{Auxiliary Results}

\begin{lemma}
    \label{lemma:1}
    Given a unit vector $\bm v_i$ with  and $\bm \epsilon \sim \mathcal N(\bm 0, \bm I_d)$, we have $$\mathbb{E}_{\bm \epsilon \sim \mathcal N(\bm 0, \bm I_d)}[\left(\bm v_i^T \bm \epsilon\right)^2 / ||\bm \epsilon||^2_2] = \dfrac{1}{d}.$$
\end{lemma}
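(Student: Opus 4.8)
\textbf{Proof proposal for Lemma~\ref{lemma:1}.} The plan is to exploit the rotational invariance of the standard Gaussian. First I would observe that for any orthogonal matrix $\bm Q \in \mathbb R^{d\times d}$, the random vector $\bm Q \bm \epsilon$ has the same distribution as $\bm \epsilon$, and moreover $\|\bm Q\bm\epsilon\|_2 = \|\bm\epsilon\|_2$. Choosing $\bm Q$ so that $\bm Q^T \bm v_i = \bm e_1$ (the first standard basis vector), a change of variables gives
\begin{equation*}
    \mathbb{E}_{\bm \epsilon \sim \mathcal N(\bm 0, \bm I_d)}\!\left[\frac{(\bm v_i^T \bm \epsilon)^2}{\|\bm \epsilon\|_2^2}\right]
    = \mathbb{E}_{\bm \epsilon \sim \mathcal N(\bm 0, \bm I_d)}\!\left[\frac{\epsilon_1^2}{\epsilon_1^2 + \cdots + \epsilon_d^2}\right],
\end{equation*}
so it suffices to compute the right-hand side, which no longer depends on $\bm v_i$.

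Next I would use the exchangeability of the coordinates $\epsilon_1,\dots,\epsilon_d$: since they are i.i.d., the quantity $\mathbb{E}[\epsilon_j^2/\|\bm\epsilon\|_2^2]$ takes the same value for every $j \in \{1,\dots,d\}$. Summing over $j$ and using linearity of expectation,
\begin{equation*}
    \sum_{j=1}^d \mathbb{E}\!\left[\frac{\epsilon_j^2}{\|\bm\epsilon\|_2^2}\right]
    = \mathbb{E}\!\left[\frac{\sum_{j=1}^d \epsilon_j^2}{\|\bm\epsilon\|_2^2}\right]
    = \mathbb{E}[1] = 1,
\end{equation*}
where implicitly I note that $\|\bm\epsilon\|_2 > 0$ almost surely so the ratio is well defined. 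Hence each summand equals $1/d$, which is exactly the claim. (Equivalently, one can phrase this as: $\bm\epsilon/\|\bm\epsilon\|_2$ is uniformly distributed on $\mathbb S^{d-1}$, so $(\bm v_i^T\bm\epsilon)^2/\|\bm\epsilon\|_2^2 = (\bm v_i^T\bm u)^2$ for $\bm u \sim \mathrm U(\mathbb S^{d-1})$, and averaging over an orthonormal basis in place of $\bm v_i$ gives $\tfrac1d \sum_j (\bm v_j^T\bm u)^2 = \tfrac1d\|\bm u\|_2^2 = \tfrac1d$.)

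There is essentially no hard step here; the only thing to be careful about is justifying the interchange of sum and expectation, which is immediate because the ratio $\epsilon_j^2/\|\bm\epsilon\|_2^2$ is nonnegative and bounded above by $1$, so everything is finite and Fubini/Tonelli applies trivially. I would present the rotational-invariance reduction and the basis-averaging argument as the two key steps, in that order.
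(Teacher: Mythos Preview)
Your proposal is correct and follows essentially the same approach as the paper: the paper completes $\bm v_i$ to an orthonormal basis $\{\bm v_1,\dots,\bm v_d\}$, sets $X_j=\bm v_j^T\bm\epsilon$, and then uses the identical-distribution-plus-sum-to-one argument, which is exactly your exchangeability step after the rotational-invariance reduction. The only cosmetic difference is that you rotate $\bm v_i$ to $\bm e_1$ first, whereas the paper works directly in the completed basis; the symmetry argument and conclusion are the same.
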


\begin{proof}[Proof of \Cref{lemma:1}]
    Because $\bm \epsilon \sim \mathcal N(\bm 0, \bm I_d)$, 
    \begin{equation}
        \bm v_i^T \bm \epsilon \sim \mathcal N(\bm v_i^T \bm 0, \bm v_i^T\bm I_d \bm v_i) = \mathcal N(\bm v_i^T \bm 0, \bm v_i^T\bm I_d \bm v_i) = \mathcal N(0, 1), 
    \end{equation}
    Assume a set of $d$ unit vecotrs $\{v_1, v_2, \ldots, \bm v_i, \ldots, v_d\}$ are orthogonormal and are basis of $\mathbb{R}^{d}$, similarly, we could show that $\forall j \in [d],  X_j \coloneqq v_j^T \bm \epsilon \sim \mathcal N(0, 1).$ Therefore, we could rewrite $\left(\bm v_i^T \bm \epsilon\right)^2 / ||\bm \epsilon||^2_2$ as:
    \begin{align}
        \left(\bm v_i^T \bm \epsilon\right)^2 / ||\bm \epsilon||^2_2 &= \frac{\left(\bm v_i^T \bm \epsilon\right)^2}{||
    \sum_{k = 1}^{d} v_k v_k^T \bm \epsilon||^2_2}, \\
    &= \frac{\left(\bm v_i^T \bm \epsilon\right)^2}{
    \sum_{k = 1}^{d} \left(v_k^T \bm \epsilon \right)^2}, \\
    &= \frac{X_i^2}{\sum_{k = 1}^{d} X_k^2}.
    \end{align}

    Let $Y_i \coloneqq \frac{X_i^2}{\sum_{j = 1}^{d} X_j^2}$. Because $\forall j \in [d],  X_j \coloneqq v_j^T \bm \epsilon \sim \mathcal N(0, 1)$, $\forall j \in [d], Y_j$ has the same distribution. Additionally, $\sum_{j=1}^{d}Y_j = 1$. So:
    \begin{align*}
        \mathbb{E}_{\bm \epsilon \sim \mathcal N(\bm 0, \bm I_d)}[\frac{\left(\bm v_i^T \bm \epsilon\right)^2 }{ ||\bm \epsilon||^2_2}] = \mathbb{E}[Y_i] = \frac{1}{d} \mathbb{E}[\sum_{j=1}^{d}Y_j] = \frac{1}{d}.
    \end{align*}
\end{proof}

\begin{lemma}
    \label{lemma:2}
    Given a matrix $\bm J \in \mathbb{R}^{d \times d}$ with $\rank\left(\bm J\right) = r$. Given $\bm x$ which is uniformly sampled on the unit hypersphere $\mathbb{S}^{d-1}$, we have:
    \begin{align*}
        \mathbb{E}_{\bm x} \left[||\bm J \bm x||_2^2\right] = \frac{||\bm J||^2_F}{d}.
    \end{align*}
\end{lemma}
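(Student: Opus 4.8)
The plan is to reduce the statement to \Cref{lemma:1} via the singular value decomposition of $\bm J$ together with the standard Gaussian representation of the uniform distribution on $\mathbb{S}^{d-1}$.

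First I would write the SVD $\bm J = \bm U \bm \Sigma \bm V^\top$, where $\bm U, \bm V \in \mathbb{R}^{d\times d}$ are orthogonal, $\bm V = [\bm v_1, \dots, \bm v_d]$, and $\bm \Sigma = \diag(\sigma_1, \dots, \sigma_d)$ has exactly $\rank(\bm J) = r$ nonzero entries. Because left multiplication by the orthogonal matrix $\bm U$ preserves the Euclidean norm, for any $\bm x$ we have
\begin{align*}
\|\bm J \bm x\|_2^2 \;=\; \|\bm \Sigma \bm V^\top \bm x\|_2^2 \;=\; \sum_{i=1}^{d} \sigma_i^2 \,(\bm v_i^\top \bm x)^2 .
\end{align*}

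Next I would invoke the fact that if $\bm \epsilon \sim \mathcal N(\bm 0, \bm I_d)$ then $\bm \epsilon / \|\bm \epsilon\|_2$ is uniformly distributed on $\mathbb{S}^{d-1}$, so the expectation over $\bm x \sim \mathrm{U}(\mathbb{S}^{d-1})$ equals the expectation over $\bm \epsilon$ after the substitution $\bm x = \bm \epsilon / \|\bm \epsilon\|_2$. Then $(\bm v_i^\top \bm x)^2 = (\bm v_i^\top \bm \epsilon)^2 / \|\bm \epsilon\|_2^2$, and since each $\bm v_i$ is a unit vector, \Cref{lemma:1} yields $\mathbb{E}[(\bm v_i^\top \bm \epsilon)^2 / \|\bm \epsilon\|_2^2] = 1/d$. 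Taking expectations term by term and using $\|\bm J\|_F^2 = \sum_{i=1}^d \sigma_i^2$,
\begin{align*}
\mathbb{E}_{\bm x}\!\left[\|\bm J \bm x\|_2^2\right] \;=\; \sum_{i=1}^{d} \sigma_i^2 \cdot \frac1d \;=\; \frac{\|\bm J\|_F^2}{d}.
\end{align*}

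The computation is essentially routine; the only points requiring care are the Gaussian representation of the uniform measure on the sphere (so that \Cref{lemma:1} can legitimately be applied) and the observation that the right singular vectors $\bm v_i$ form an orthonormal family, which is precisely the hypothesis under which that lemma is stated. I do not anticipate any genuine obstacle, and in fact the rank hypothesis is not needed beyond recording that only $r$ of the $\sigma_i$ are nonzero.
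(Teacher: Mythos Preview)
Your proposal is correct and follows essentially the same approach as the paper: both proofs reduce to \Cref{lemma:1} via the singular value decomposition of $\bm J$. The only cosmetic difference is that the paper first substitutes $\bm z = \bm V^\top \bm x$ (using rotational invariance of the uniform distribution on $\mathbb{S}^{d-1}$) and then applies \Cref{lemma:1} to the standard basis vectors $\bm e_i$, whereas you apply \Cref{lemma:1} directly to the right singular vectors $\bm v_i$ after making the Gaussian representation of $\bm x$ explicit; these are equivalent, and your observation that the rank hypothesis plays no essential role is also correct.
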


\begin{proof}[Proof of \Cref{lemma:2}]
    Let's define the singular value decomposition of $\bm J = \bm U \bm \Sigma \bm V^T$ with $\Sigma = \diag\left(\sigma_1, \ldots, \sigma_r, 0 \ldots, 0\right)$. Therefore, $\mathbb{E}_{\bm x} \left[||\bm J \bm x||_2^2\right] = \mathbb{E}_{\bm x} \left[||\bm U \bm \Sigma \bm V^T \bm x||_2^2\right] = \mathbb{E}_{\bm z} \left[||\bm \Sigma \bm z||_2^2\right]$ where $\bm z \coloneqq \bm V^T \bm x$ is is uniformly sampled on the unit hypersphere $\mathbb{S}^{d-1}$. Thus, we have:
    \begin{align*}
        \mathbb{E}_{\bm z} \left[||\bm \Sigma \bm z||_2^2\right] &= \mathbb{E}_{\bm z} \left[||\sum_{i=1}^{r}\sigma_i\bm e_i^T \bm z||_2^2\right], \\
        &= \mathbb{E}_{\bm z} \left[\sum_{i=1}^{r}\sigma^2_i ||\bm e_i^T \bm z||_2^2\right], \\
        &= \sum_{i=1}^{r}\sigma^2_i \mathbb{E}_{\bm z} \left[||\bm e_i^T \bm z||_2^2\right] = \frac{||\bm J||^2_F}{d},\\
    \end{align*}
    where $\bm e_i$ is the standard basis with $i$-th element equals to 0. The second equality is because of independence between $\bm e_i^T \bm z$ and $\bm e_j^T \bm z$. The fourth equality is from \Cref{lemma:1}.
\end{proof}

\begin{lemma}
    \label{lemma:3}
    Given function $f\left(\bm x\right) = ||\bm J \bm x||^2_2$, the lipschitz constant $L_{f}$ of function $f\left(\bm x\right)$ is: $$L_{f} = 2 ||\bm J||^2_2.$$
\end{lemma}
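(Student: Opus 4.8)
The plan is to prove that $f$ is $2\|\bm J\|_2^2$-Lipschitz \emph{on the unit sphere} $\mathbb{S}^{d-1}$, which is precisely the domain on which this bound is invoked (namely in the application of Levy's Lemma in the proof of \Cref{thm:consistency}); note that $f$ cannot be globally Lipschitz on $\mathbb{R}^d$, since its gradient $\nabla f(\bm x) = 2\bm J^T\bm J\bm x$ is unbounded there, so the restriction to the sphere is essential and should be stated explicitly.

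First I would record the elementary difference-of-squares identity: for all $\bm x, \bm y \in \mathbb{R}^d$,
\[
    f(\bm x) - f(\bm y) \;=\; \|\bm J\bm x\|_2^2 - \|\bm J\bm y\|_2^2 \;=\; \Inner{\bm J(\bm x - \bm y)}{\bm J(\bm x + \bm y)}.
\]
Then, applying Cauchy--Schwarz and the definition of the spectral norm $\|\bm J\|_2$ to each factor,
\[
    |f(\bm x) - f(\bm y)| \;\le\; \|\bm J(\bm x - \bm y)\|_2 \, \|\bm J(\bm x + \bm y)\|_2 \;\le\; \|\bm J\|_2^2 \, \|\bm x - \bm y\|_2 \, \|\bm x + \bm y\|_2 .
\]
Finally, for $\bm x, \bm y \in \mathbb{S}^{d-1}$ the triangle inequality gives $\|\bm x + \bm y\|_2 \le \|\bm x\|_2 + \|\bm y\|_2 = 2$, whence $|f(\bm x) - f(\bm y)| \le 2\|\bm J\|_2^2 \, \|\bm x - \bm y\|_2$, yielding the claimed constant $L_f = 2\|\bm J\|_2^2$.

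An equivalent route, if one prefers a calculus argument, is to bound $\|\nabla f(\bm x)\|_2 = 2\|\bm J^T\bm J\bm x\|_2 \le 2\|\bm J^T\bm J\|_2\,\|\bm x\|_2 = 2\|\bm J\|_2^2\,\|\bm x\|_2 \le 2\|\bm J\|_2^2$ on the closed unit ball, and then integrate $\nabla f$ along the segment joining $\bm x$ and $\bm y$ — which stays inside the unit ball whenever $\bm x, \bm y \in \mathbb{S}^{d-1}$ — via the mean value inequality. I expect no genuine obstacle here; the only point requiring care is the domain issue noted above. One can moreover check the constant is tight by taking $\bm y = -\bm x$ with $\bm x$ a top right-singular vector of $\bm J$, which forces equality throughout.
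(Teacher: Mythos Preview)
Your main argument is correct. The paper's own proof takes the gradient route: it computes $\nabla f(\bm x) = 2\bm J^T\bm J\bm x$ and then writes $L_f = \sup_{\bm x \in \mathbb{S}^{d-1}} \|\nabla f(\bm x)\|_2 = 2\|\bm J\|_2^2$. Your \emph{alternative} route is exactly this, but stated more carefully: you correctly note that one must bound the gradient on the closed unit \emph{ball} (the convex hull of $\mathbb{S}^{d-1}$) so that the mean-value inequality applies along chords, a subtlety the paper glosses over. Your \emph{primary} route via the difference-of-squares identity $\|\bm J\bm x\|_2^2 - \|\bm J\bm y\|_2^2 = \langle \bm J(\bm x-\bm y), \bm J(\bm x+\bm y)\rangle$ is genuinely different and arguably cleaner, since it avoids calculus entirely and makes the role of the sphere constraint (through $\|\bm x+\bm y\|_2 \le 2$) completely transparent. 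Your observation that $f$ is not globally Lipschitz on $\mathbb{R}^d$ and that the sphere restriction is essential for Levy's lemma is also a point the paper leaves implicit.

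One small error: your final tightness remark is wrong. Taking $\bm y = -\bm x$ gives $f(\bm x) - f(\bm y) = \|\bm J\bm x\|_2^2 - \|\bm J(-\bm x)\|_2^2 = 0$, while $\|\bm x - \bm y\|_2 = 2$, so this pair does \emph{not} saturate the Lipschitz bound (and $\|\bm x + \bm y\|_2 = 0 \ne 2$, so the chain of inequalities is not tight either). In fact the constant $2\|\bm J\|_2^2$ is generally not sharp on the sphere: for $\bm J = \mathrm{diag}(1,0)$ in $\mathbb{R}^2$, one can check the true Lipschitz constant of $x_1^2$ on $\mathbb{S}^1$ is $1$, not $2$. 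This does not affect the validity of the upper bound, which is all that is needed downstream; just drop the tightness claim.
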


\begin{proof}[Proof of \Cref{lemma:3}]
    The jacobian of $f(\bm x)$ is:
    \begin{equation*}
        \nabla_{\bm x}f(\bm x) = 2 \bm J^T \bm J \bm x,
    \end{equation*}
    Therefore, the lipschitz constant $L$ follows:
    \begin{equation*}
        L_{f} = \sup_{\bm x \in \mathbb{S}^{d-1}} ||\nabla_{\bm x}f(\bm x)||_2 = 2 \sup_{\bm x \in \mathbb{S}^{d-1}} ||\bm J^T \bm J \bm x||_2 = ||\bm J^T \bm J||_2 = ||\bm J||^2_2
    \end{equation*}
\end{proof}

\end{document}